\documentclass[a4paper]{article}
\usepackage[margin=1in]{geometry}
\usepackage{lmodern}
\usepackage{amsmath,amssymb,amsthm}
\usepackage{mathtools}
\usepackage{enumitem}
\usepackage[dvipsnames]{xcolor}
\usepackage[authoryear,round]{natbib}
\usepackage[colorlinks=true,linkcolor=red!45!black,citecolor=blue!55!black,urlcolor=blue!70!black,hypertexnames=false]{hyperref}

\newtheorem{theorem}{Theorem}[section]
\newtheorem{lemma}[theorem]{Lemma}

\theoremstyle{remark}
\newtheorem{note}{Note}
\usepackage{thm-restate}

\usepackage{dsfont}

\DeclareMathOperator{\bbE}{\mathbb E}
\DeclareMathOperator{\bbP}{\mathbb P}
\newcommand{\bbR}{\mathbb R}

\newcommand{\cT}{\mathcal T}

\newcommand{\cN}{{\mathcal N}}
\newcommand{\cE}{\mathcal{E}}
\newcommand{\cV}{\mathcal{V}}

\newcommand{\cX}{{\mathcal X}}

\newcommand{\Err}{\operatorname{Err}}

\newcommand{\ba}{\boldsymbol a}

\newcommand{\bZ}{\boldsymbol Z}
\newcommand{\bw}{\boldsymbol w}
\newcommand{\be}{\boldsymbol e}
\newcommand{\bb}{\boldsymbol b}
\newcommand{\frakc}{\mathfrak c}

\title{Optimal Lower Bounds for Networked Information Aggregation}
\author{
    Ambar Pal\footnote{This work is not related to AP's position at Amazon.}\\
    {\small ambarpal@amazon.com}\\
    Amazon Responsible AI
}
\date{\today}

\begin{document}
\maketitle

\begin{abstract} 
The problem of networked information aggregation, studied in \citet{krr}, involves a group of learners situated on the vertices of a directed acyclic graph $G$, each learning a linear predictor $\widehat Y$ for a fixed random variable $Y$ given access to a local feature, as well as the predictors learnt by its parents. Learning proceeds iteratively, with learners ordered according to a topological sort of $G$. The main quantity of interest is the error incurred by the current learner, constrained to this flow of information, with respect to the best linear predictor using all the features seen so far. When the studied error is the MSE, i.e., $\bbE (\widehat Y - Y)^2$, \citet{krr} show that the error is at most $O(1/\sqrt{D})$ along a path of length $D$. They also obtain a hard instance where the {\rm MSE} is lower bounded by $\Omega(1/D)$, leaving the correct order open. In this work, we resolve this central open problem, and obtain a family of worst case problem instances with a {\rm MSE} lower bound of $\Omega(1/\sqrt{D})$. %

By exploiting invariances in the structure of the learnt predictors, our analysis generalizes to all loss functions $\ell(\widehat Y, Y)$ satisfying regularity conditions which include strong convexity in a ball around the origin, and that the ideal predictor minimizing the population loss is positively correlated with the label. We show that networked information aggregation on a gaussian instance in our worst case family incurs an $\ell$-error lower bounded by $\Omega(1/\sqrt{D})$ with respect to this ideal predictor. We demonstrate that a variety of common losses satisfy these regularity conditions. In particular, the logistic loss satisfies them, and hence our analysis also closes the gap between the upper and lower bounds in \citet{bateninetworked}.
\end{abstract}

\tableofcontents

\section{Problem Setup} \label{sec:setup}
We have a group of learners $\cV$ each of which try to learn a predictor $\widehat Y_i$ for a random variable $Y$, that obtains a low population loss $\bbE \ell(\widehat Y_i, Y)$. %
These learners are arranged in the form of a directed acyclic graph $G = (\cV, \cE)$, where an edge $(i, j) \in \cE$ exists whenever learner $j$ has access to the predictor learnt by learner $i$. %
Each learner has knowledge of a local set of \emph{features}, i.e., random variables $\cX_i$ that may be predictive of $Y$. Additionally, each learner $i$ has access to the predictors $\widehat Y_j$ learnt by its parents in $G$. %

Learning proceeds iteratively following along vertices ordered by a topological sort of $G$. At each iteration, the current learner $j$ obtains a linear predictor $\widehat Y_j$ by solving the optimization problem $\min_{\widehat Y} \bbE \ \ell(\widehat Y, Y)$ over $\widehat Y$ constrained to be linear in the features accessible to learner $j$, i.e., $\cX_j \cup \{\widehat Y_i \colon (i, j) \in \cE\}$. For the purposes of our analysis, it will be sufficient to assume that both these sets are singletons, i.e., $\cX_j = \{X_j\}$ for all learners $j$, and each $j$ has at most one parent $\text{par}(j)$ in $G$.

For $\ell(\widehat Y, Y) = (\widehat Y - Y)^2$, the above becomes the so-called \emph{networked information aggregation} setup studied by \citet{krr}. In this work, we will analyze a generalization of the worst case instance for this problem studied by \citet{krr}, and show that the error decays along the path at a rate inversely proportional to the square root of the length of the path. Further, we will see that the same lower bound applies to logistic regression, studied in \citet{bateninetworked}. 

\section{Main Results}
\paragraph{Worst Case Instance Family.} Let $Z_1, \ldots, Z_k$ be random variables such that $\bbE Z_i = 0, \bbE Z_i^2 = 1$ for $i \in \{1, 2, \ldots, k\}$, and $\bbE Z_i Z_j = 0$ whenever $i \neq j$ for $i, j \in \{1, 2, \ldots, k\}$. Let the learners be $\cV=\{1, 2, \ldots, k \cdot (k - 1)\}$, and the edges $\cE$ be given by
\[
1 \longrightarrow 2 \cdots \longrightarrow k
\longrightarrow k + 1 \longrightarrow k + 2 \longrightarrow \cdots \longrightarrow 2k \longrightarrow \cdots \longrightarrow k \cdot (k - 1).
\]
We define the \emph{pass} $p$ to be the group of learners 
\[
\{(p - 1) \cdot k  + 1, (p - 1) \cdot k + 2, \ldots, (p - 1) \cdot k + k\}.
\]
For each vertex $i$, we let the local feature set be a singleton, given by the random variable $X_i$. For the first vertex on pass $p$, we let $X_{(p - 1) \cdot k + 1} = Z_1$, and for each $2 \leq i \leq k$, we let $X_{(p - 1) \cdot k + i} = Z_i - Z_{i-1}$. Note that the set of all features viewed in each pass is the same. Finally, we define the label $Y = Z_k$. 

Recall that each learner solves an optimization program to minimize the population loss $L$ with respect to the features it has access to. We define $\Err^\ell(i)=\bbE \ell(\widehat Y_i, Y)$, and our results provide bounds on the error at the end of each pass $p$, defined as $\Err^\ell_p = \bbE \ell(\widehat Y_{p \cdot k}, Y)$ for $p \in \{1, 2, \ldots, k - 1\}$. Our results can be divided according to the loss used. 

\subsection{Least Squares Regression}
Setting the loss to be the mean squared error, i.e., $\ell_{\rm MSE}(\widehat Y, Y) = (\widehat Y - Y)^2$ recovers the setup studied in \citet{krr}. Each learner $i$ solves the optimization problem 
\begin{equation}\label{eq:vertex-program-mse}
\min_{\alpha,\beta\in\bbR}
\bbE \left[\bigl(\alpha X_i + \beta \widehat Y_{i - 1} - Y \bigr)^2\right]
\end{equation}
to obtain the predictor $\widehat Y_i = \alpha_i X_i + \beta_i \widehat Y_{i-1}$, with the convention $\widehat Y_0 = 0$. Note that the ideal linear predictor at the end of each pass, given access to all the features seen so far, obtains zero error, as $\widehat Y^\star = X_1 + X_2 + \ldots + X_k = Y$ obtains $\ell_{\rm MSE}(\widehat Y^\star, Y) = 0$. Our main result is
\begin{restatable}[Simplified MSE lower bound]{theorem}{mainMSELowerBound}\label{thm:main-mse}
For every instance in the worst case family, and every $1\leq p\leq k-1$,
\[
\Err^{\rm MSE}_p\ge\frac{1}{36\sqrt p}.
\]
\end{restatable}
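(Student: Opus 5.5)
The plan is to identify each predictor with its coefficient vector in the orthonormal coordinates $Z_1,\dots,Z_k$. Since every $\widehat Y_i$ is a linear combination of the $Z_j$ and $\bbE Z_iZ_j=\delta_{ij}$, we write $\widehat Y_i=\langle \bw_i,\bZ\rangle$ with $\bw_i\in\bbR^k$. Then $\Err^{\rm MSE}(i)=\|\be_k-\bw_i\|^2$, and program \eqref{eq:vertex-program-mse} is the orthogonal projection of $\be_k$ onto $\mathrm{span}\{\ba_i,\bw_{i-1}\}$. Here $\ba_i$ is the coefficient vector of $X_i$: it is $\be_1$ at the first position of a pass and $\be_j-\be_{j-1}$ at position $j\ge 2$.

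Let $\boldsymbol r_i=\be_k-\bw_i$ be the residual. Because $\bw_{i-1}$ lies in the span being projected onto, $\bw_i$ is orthogonal to $\boldsymbol r_i$ and the error is nonincreasing. More precisely, the exact one-step decrease is
\[
\|\boldsymbol r_{i-1}\|^2-\|\boldsymbol r_i\|^2=\frac{\langle \boldsymbol r_{i-1},\tilde\ba_i\rangle^2}{\|\tilde\ba_i\|^2},
\qquad \tilde\ba_i=\ba_i-\frac{\langle\ba_i,\bw_{i-1}\rangle}{\|\bw_{i-1}\|^2}\bw_{i-1}.
\]
At position $j\ge2$ of a pass the numerator involves $\langle\boldsymbol r_{i-1},\be_j-\be_{j-1}\rangle=r_j-r_{j-1}$, a discrete derivative of the residual profile. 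The aim is therefore to show that the residual stays ``smooth'' along the path of coordinates $1,\dots,k$.

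The core of the argument is a one-pass recursion of the form
\[
\Err_{p}-\Err_{p+1}\le C\,\Err_{p+1}^3 ,
\]
or equivalently $\Err_{p+1}\ge \Err_p(1-c\,\Err_p^2)$ in the regime where $\Err_p$ is small. I would first establish a structural invariant at the end of every pass, proved by induction on steps within a pass. The invariant is that $\bw_{pk}$ has nonnegative, nondecreasing coordinates, and that the residual coordinates $r_1,\dots,r_k$ are nonnegative, nondecreasing and vanish at the left end.

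Under this invariant the relevant quantities are controlled by the energy $\|\boldsymbol r\|^2$ of a monotone profile with fixed endpoint behaviour. Cauchy--Schwarz then bounds the total squared-increment budget $\sum_j (r_j-r_{j-1})^2$ that a pass can exploit. Combined with a lower bound on $\|\tilde\ba_i\|^2$, this yields the cubic bound on the per-pass gain. The lower bound on $\|\tilde\ba_i\|^2$ holds because $\ba_i$ cannot be nearly parallel to a monotone $\bw$ of norm close to $1$.

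Given the recursion, the conclusion is a standard discrete ODE comparison with $\dot e=-Ce^3$, whose solution decays like $1/\sqrt{2Cp}$. The base case is explicit. In pass $1$, learners $1,\dots,k-1$ see features uncorrelated with $Y=Z_k$ beyond what they already hold, and learner $k$ gets $\widehat Y_k=\tfrac12(Z_k-Z_{k-1})$, so $\Err_1=\tfrac12\ge\tfrac1{36}$. The inductive step shows that $e_p\ge a/\sqrt p$ and the recursion imply $e_{p+1}\ge a/\sqrt{p+1}$, using that $e\mapsto e-Ce^3$ is increasing for $e\le 1/\sqrt{3C}$ together with the elementary inequality $1/\sqrt p-Ca^2/p^{3/2}\ge 1/\sqrt{p+1}$ once $a^2\le 1/(2C)$. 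Tracking constants should produce the $1/36$.

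I expect the main obstacle to be the per-pass recursion, specifically preserving the monotonicity invariant through every step of a pass. Within a pass $\bw$ is rescaled at each step: $\beta_i$ multiplies the old predictor. So I would try first to show that the map $\bw_{i-1}\mapsto\bw_i$ at position $j$ only modifies coordinates $j-1,j$ plus a global rescale, and then verify monotonicity by direct computation of $\alpha_i,\beta_i$ in closed form.
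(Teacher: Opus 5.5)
There is a genuine gap. The cubic per-pass bound $\Err_p-\Err_{p+1}\le C\,\Err_{p+1}^3$ carries the whole content of the theorem; the base case and the comparison step are routine. The invariant you plan to derive it from is false.

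Its two halves are incompatible. Since $r_j=-w_j$ for $j<k$, nonnegative predictor coefficients and nonnegative residuals would force $\bw_{pk}\propto\be_k$. That already contradicts your own base case $\widehat Y_k=\tfrac12(Z_k-Z_{k-1})$.

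The residual half also fails on its own. At the end of every pass three identities hold: $\sum_j r_j=1$ (the coefficients of each predictor sum to zero), $r_k=1-a_k=\Err_p$ (orthogonality of residual and predictor), and $\|\boldsymbol r\|_2^2=\Err_p$. If $\boldsymbol r$ were nonnegative and nondecreasing, then $\Err_p=\sum_j r_j^2\le r_k\sum_j r_j=\Err_p$. Equality forces every nonzero $r_j$ to equal $\Err_p$, so the residual would be a flat step and $1/\Err_p$ an integer. This holds for $p=1,2$ but not at $p=3$. There a direct computation gives the residual $(\dots,0,\tfrac{10}{51},\tfrac{15}{51},\tfrac{13}{51},\tfrac{13}{51})$ with $\Err_3=\tfrac{13}{51}$, which is not monotone.

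More importantly, the mechanism points the wrong way: a monotone profile is exactly the worst case for your estimate. Take a flat step of height $\Err_p$. The Type-II update that first averages a $0$ with a $-\Err_p$ lowers the squared norm by $\Err_p^2/2$. By itself it therefore lowers the error by $a_k\frac{\Err_p^2/2}{a_k-\Err_p^2/2}\ge\Err_p^2/2$. So if your invariant held at every pass you would get $\Err_{p+1}\le\Err_p-\Err_p^2/2$, whence $\Err_p\le 2/(p+3)$.

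Monotonicity, the energy identity and Cauchy--Schwarz thus cannot give more than a quadratic per-pass gain. That only reproduces the $\Omega(1/p)$ rate of \eqref{eq:krr-bound}. The missing ingredient is quantitative control of the shape of the (non-monotone) residual profile. For instance, you would need $\max_j r_j=O(\Err_p)$ and increments $|r_j-r_{j-1}|=O(\Err_p^2)$ at every step of the sweep, and nothing in your sketch supplies this.

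The paper gets the needed shape information without ever estimating per-pass gains.
\begin{itemize}
\item The exact recurrence $\bw^{(p+1)}=\eta_p\cT\bw^{(p)}+\Err_{p+1}\be_1$ writes the profile $(r_{k-1},r_{k-2},\dots)$ as a positive mixture of the vectors $\cT^n\be_1$. Here $\bw^{(p)}$ is the paper's profile, not your predictor $\bw_i$.
\item The tail sums of each $\cT^n\be_1$ are computed exactly by generating functions.
\item Because the mixture weights are positive, the tail-ratio bound passes to the mixture. The unknown normalizations $\eta_r$ drop out, giving $c_{k-N}\le e^{-N(N-1)/(4(p-1))}$.
\item With $N=\lceil 2\sqrt p\rceil$, residual mass at least $1-e^{-1/2}>1/3$ sits on the last $N+1\le 4\sqrt p$ coordinates. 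Cauchy--Schwarz on those coordinates gives $\Err_p\ge 1/(36\sqrt p)$.
\end{itemize}
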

For a fixed $k$, along with the matching upper bound of $O(\frac{1}{\sqrt{p}})$ in \citet[Corollary 3.10]{krr}, this settles the correct order of dependence on depth for learning with the MSE loss in the networked information aggregation model. Theorem~\ref{thm:main-mse} contains simplified constants for clarity, the full result is in Theorem~\ref{thm:regression-improved-constants}.

\subsection{Logistic Regression} \label{sec:logistic}
Setting $\ell$ to be the logistic loss, $\ell_{\rm logistic}(\widehat Y, Y) = -\sigma(Y) \log \sigma(\widehat Y) - (1 - \sigma(Y)) \cdot \log (1 - \sigma (\widehat Y))$, where $\sigma(y) = \frac{1}{1 + \exp(-y)}$ recovers the setup studied in \citet{bateninetworked}. Note that we require a slight transformation of the random variables in \citet{bateninetworked} to keep our exposition consistent: our $Y$ is a real-valued random variable, which is transformed into their binary label probability using $\sigma(Y)$, and similarly our real-valued $\widehat Y$ is viewed as the \emph{logit} learnt by each learner in their setup to produce the probability $\sigma(\widehat Y)$ of predicting the label $1$ in their setup. Similar to \eqref{eq:vertex-program-mse}, each learner $i$ now solves the optimization problem
\begin{equation*}
\min_{\alpha,\beta\in\bbR}
\bbE \left[ \ell_{\rm logistic} \bigl(\alpha X_i + \beta \widehat Y_{i - 1}, Y \bigr)\right].
\end{equation*}
Note that the ideal linear predictor at the end of each pass is still $\widehat Y^\star = \sum_i X_i = Y$, but it no longer achieves zero error, as $\ell_{\rm logistic}(Y, Y) = \ell_{\rm logistic}(Z_k, Z_k)$ is non-negative in general. We hence show 
\begin{restatable}[Simplified Logistic error lower bound]{theorem}{mainLogisticLowerBound}\label{thm:main-logistic}
For the instance in the worst case family where $Z_1, Z_2, \ldots, Z_k \overset{\rm i.i.d.}{\sim} \cN(0, 1)$, for every $1\leq p\leq k-1$,
\begin{equation*}
\Err^{\rm logistic}_p \ge\frac{1}{10^4 \sqrt p} + \Err^{\rm logistic, \star},
\end{equation*}
where $\Err^{\rm logistic, \star} = \bbE \ell_{\rm logistic}(Z_k, Z_k)$. 
\end{restatable}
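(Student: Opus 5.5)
The plan is to reduce the logistic problem to the same linear-algebraic dynamics that drive the MSE bound of Theorem~\ref{thm:main-mse}, using Gaussian invariance.

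\textbf{Step 1: the predictors stay in the span of the features.} Every predictor $\widehat Y_i$ is a linear combination of the features, so it can be written as $\widehat Y_i = \langle \bw_i, \bZ\rangle$ with $\bw_i\in\bbR^k$. Because $\bZ\sim\cN(0,I_k)$, the pair $(\widehat Y, Y)$ is jointly Gaussian. Its law depends only on $a=\bbE[\widehat Y Y]=w_k$ and $b^2=\bbE[\widehat Y-aY]^2=\|\bw\|^2-w_k^2$. Hence the population loss is a function $F(a,b)=\bbE\,\ell_{\rm logistic}(aZ_k+bG,Z_k)$, where $G\sim\cN(0,1)$ is independent of $Z_k$.

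\textbf{Step 2: compare with least squares.} At a fixed learner, the optimizer ranges over the $2$-dimensional subspace spanned by $X_i$ and $\widehat Y_{i-1}$. Within that subspace, $F$ depends only on the component of $\bw$ along $\be_k$ and the norm of the orthogonal component. I would show that for each fixed $b$, $F(\cdot,b)$ has its minimum over $a$ at $a^\star(b)\le 1$. I would also show $F$ is strictly increasing in $b$, which should follow from convexity of $\ell$ together with symmetry of $G$. It follows that the logistic minimizer over a plane is the MSE-type minimizer direction, i.e.\ the point of the plane closest to the ray through $\be_k$, rescaled by a scalar $c_i\in(0,1]$. This is the invariance argument the abstract alludes to. The upshot is that the logistic iterates are positive rescalings of the MSE iterates, $\bw_i^{\log}=c_i\bw_i^{\rm MSE}$. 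Rescaling cannot change the span, so the chain of spans is identical to the MSE case.

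\textbf{Step 3: lower-bound the excess loss.} By Theorem~\ref{thm:main-mse}, the direction at the end of pass $p$ has angle $\theta_p$ to $\be_k$ with $\sin^2\theta_p\gtrsim 1/\sqrt p$. In the $(a,b)$ parametrization this gives $b\ge \tan\theta_p\cdot a$ along the ray. I would then bound the excess loss from below:
\[
F(a,b)-F(1,0)\ \ge\ \min_{a}\bigl[F(a,a\tan\theta_p)-F(1,0)\bigr]\ \gtrsim\ \tan^2\theta_p ,
\]
using strong convexity of $F$ on a ball around $(1,0)$. The ideal point $(1,0)$ is the global minimizer, since $\sigma(Z_k)$ is exactly the conditional probability. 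Points far from $(1,0)$ are handled by convexity: the loss grows at least linearly outside the ball. The strong convexity constant comes from $\bbE[\sigma'(Z_k)Z_k^2]$ and $\bbE[\sigma'(Z_k)]$ on a bounded region, which are explicit Gaussian integrals. Crude numerics should give the constant $10^{-4}$ after combining with the $1/36$ from Theorem~\ref{thm:main-mse}.

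\textbf{Main obstacle.} I expect the crux to be Step 2. I must show the logistic optimizer over a plane lies on the same ray as the MSE optimizer, meaning its direction minimizes the angle to $\be_k$. This needs the monotonicity of $F$ in $b$ at fixed correlation, and positivity of the scaling $c_i$; the latter corresponds to the paper's condition that the ideal predictor be positively correlated with the label. I would first try a direct argument: decompose $\widehat Y = aY + bG$ and use Jensen over $G$ conditional on $Y$, which shows adding independent symmetric noise increases the convex loss. Then I would optimize over angle and scale separately.
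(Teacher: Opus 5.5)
\textbf{The gap: the claim $a^\star(b)\le 1$ is false, and it is your only argument for the sign of $c_i$.} In Step 2 you say $F(\cdot,b)$ is minimized at some $a^\star(b)\le1$. Stein's identity gives $\partial_aF(a,b)=a\,\bbE[\sigma'(\sqrt{a^2+b^2}\,G)]-\bbE[\sigma'(G)]$. Since $\sigma'$ is even and decreasing on $[0,\infty)$, $\partial_aF(1,b)=\bbE[\sigma'(\sqrt{1+b^2}\,G)]-\bbE[\sigma'(G)]\le0$, strictly for $b>0$. By convexity in $a$, this forces $a^\star(b)\ge1$: noise in the logit calls for a larger coefficient on $Z_k$, not a smaller one.

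Even a correct statement about the slices $b=\text{const}$ would not give the sign of $c_i$. Yet $c_i\neq0$ is exactly what your induction on spans needs. The right object is the one-dimensional problem along the MSE line. For $i\ge k$, $\widehat Y_{i,\rm MSE}$ has $a=\gamma_0=a^i_k\in[\frac12,1)$ and $b^2=\gamma_0(1-\gamma_0)$ by Lemma~\ref{lem:invariants}. Set $g(c)=\bbE\,\ell_{\rm logistic}(c\,\widehat Y_{i,\rm MSE},Z_k)$. Then $g'(c)=\gamma_0\{c\,\bbE[\sigma'(c\sqrt{\gamma_0}\,G)]-\bbE[\sigma'(G)]\}$. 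Hence $g'(0)<0\le g'(1)$, so the minimizer lies in $(0,1]$, and it is unique by strict convexity of $g$. This is precisely the paper's Lemma~\ref{lem:F-gradient}, i.e. condition \eqref{regular2}.

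\textbf{What works, and how it differs from the paper.} With that replacement the rest of your plan goes through and follows the paper's architecture. Your collinearity argument is a valid geometric substitute for the Li--Duan conditional-Jensen step in Lemma~\ref{lem:scale}. You note that $b\mapsto\bbE\,\ell(az+bG,z)$ is strictly convex and even, so $F$ is strictly increasing in $b\ge0$. Then Cauchy--Schwarz, $w_k=\langle\bw,P_S\be_k\rangle\le\|\bw\|_2\|P_S\be_k\|_2$ with $P_S$ the orthogonal projection onto the plane $S$, shows that for fixed $w_k$ the smallest $b$ in $S$ is attained on the line through $P_S\be_k$.

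Your Step 3 is a genuine, if small, improvement. Write $d=\|\bw-\be_k\|_2$. Use strong convexity on $\{\|\bw-\be_k\|_2\le1\}\subseteq\{\|\bw\|_2\le2\}$ and convexity outside that ball. This gives excess loss at least $\frac{\mu}{2}\min\{d^2,d\}\ge\frac{\mu}{2}\Err^{\rm MSE}_p$. So you never need $c_i\le1$; the paper uses $\gamma\le2$ only to keep $\ba^\ell$ inside the strong-convexity region.

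\textbf{Two minor corrections.}
\begin{enumerate}
\item The distance from $(1,0)$ to the line $b=a\tan\theta_p$ is $\sin\theta_p$, not $\tan\theta_p$. Note $\sin^2\theta_p=1-a^{(p)}_k=\Err^{\rm MSE}_p$; the slip is harmless since $\theta_p\le\pi/4$.
\item The constant needs lower bounds on $\bbE[\sigma'(rG)]$ and $\bbE[G^2\sigma'(rG)]$ that hold uniformly over $0\le r\le2$, not only at $\be_k$. For example, the paper's $\sigma'(t)\ge\frac14e^{-t^2/4}$ yields $\mu\ge1/(12\sqrt3)$. Then $\frac{\mu}{2}\Err^{\rm MSE}_p\ge\frac{0.048}{72\sqrt p}>\frac{10^{-4}}{\sqrt p}$.
\end{enumerate}
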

For a fixed $k$, along with the matching upper bound of $O(\frac{1}{\sqrt{p}})$ in \citet[Theorem 3.8]{bateninetworked}, this settles the correct order of dependence on depth for learning with the logistic loss in the networked information aggregation model. Theorem~\ref{thm:main-logistic} contains simplified constants, and is in fact proved in Section~\ref{subsec:logistic} as an application of our more general result Theorem~\ref{thm:main-general}.

\subsection{General Convex Loss}
Finally, we generalize our lower bound arguments to a differentiable loss $\ell$. Subject to a few regularity conditions including strong convexity in a ball around the origin and differentiability, we show that,
\begin{theorem}[Simplified general error lower bound]\label{thm:main-general}
For the gaussian instance in the worst case family where $Z_1, Z_2, \ldots, Z_k \overset{\rm i.i.d.}{\sim} \cN(0, 1)$, for any convex loss $\ell$ satisfying \eqref{regular1} and \eqref{regular2}, for every $1 \leq p \leq k - 1$, we have,
\begin{equation*}
\Err^\ell_p\ge\frac{\mu}{72 \sqrt p} + \Err^{\ell, \star},
\end{equation*}
where $\Err^{\ell, \star} = \bbE \ell(Z_k, Z_k)$, and $\mu$ is the strong convexity parameter in \eqref{regular1}. 
\end{theorem}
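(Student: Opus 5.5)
Since every predictor in this instance is a linear combination of $Z_1,\dots,Z_k$, I plan to reduce the general-loss bound to a statement about coefficient vectors, and then reuse the MSE-type analysis behind Theorem~\ref{thm:main-mse}. Write $\widehat Y_i = \langle \bw_i, \bZ\rangle$ with $\bZ=(Z_1,\dots,Z_k)$, and let $\be_k$ be the coefficient vector of $Y=Z_k$.

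\textbf{Step 1: invariance of the learnt predictors.} Because the $Z_j$ are i.i.d.\ standard gaussians, the pair $(\widehat Y, Y)$ for a linear $\widehat Y=\langle \bw,\bZ\rangle$ is jointly gaussian. Its law depends only on $\langle \bw,\be_k\rangle$ and $\|\bw\|^2$. I would use rotational invariance in the coordinates orthogonal to $\be_k$ to show that each learner's optimization problem behaves like a projection. The program is convex in $(\alpha,\beta)$. By \eqref{regular2}, the ideal predictor is positively correlated with the label, and here that ideal predictor is $Z_k$ itself. Together these should give a key claim: the loss-optimal predictor in the span $\{X_i,\widehat Y_{i-1}\}$ is a positive rescaling of the MSE-optimal (orthogonal-projection) predictor in that span. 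Concretely, the function $\bw\mapsto \bbE\ell(\langle\bw,\bZ\rangle,Z_k)$ depends on $\bw$ only through $(a,b)=(\langle\bw,\be_k\rangle,\|\bw-a\be_k\|)$. Minimizing over a 2-dimensional subspace then amounts to choosing the direction of best alignment with $\be_k$, which is the projection direction, and afterwards optimizing the scale. I would verify this by checking first-order conditions.

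\textbf{Step 2: transfer the MSE geometry.} Given Step 1, the \emph{direction} of $\bw_i$ follows exactly the MSE recursion, up to positive scalings. These scalings do not change the span available to the next learner. Hence the angle $\theta_p$ between $\bw_{pk}$ and $\be_k$ is the same as in the MSE instance. The MSE analysis behind Theorem~\ref{thm:main-mse} (Theorem~\ref{thm:regression-improved-constants}) gives $\sin^2\theta_p \gtrsim 1/\sqrt p$. Here I use that the MSE-optimal predictor with direction $\bw$ has error $\sin^2\theta$ when $\|\be_k\|=1$, so $\Err^{\rm MSE}_p\ge 1/(36\sqrt p)$ yields $\sin^2\theta_p\ge 1/(36\sqrt p)$.

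\textbf{Step 3: strong convexity converts misalignment into excess loss.} Let $F(\bw)=\bbE\ell(\langle\bw,\bZ\rangle,Z_k)$, minimized at $\bw=\be_k$ with value $\Err^{\ell,\star}$. Condition \eqref{regular1} gives strong convexity of $\ell$ on a ball around the origin. I expect this to yield $F(\bw)-F(\be_k)\ge \tfrac{\mu}{2}\bbE[(\langle \bw-\be_k,\bZ\rangle)^2\mathbf 1\{\text{in ball}\}]$, which is at least a constant times $\mu\|\bw-\be_k\|^2$ after a gaussian computation. Since $\|\bw-\be_k\|^2\ge \sin^2\theta$ for every $\bw$ at angle $\theta$, combining with Step 2 gives $\Err^\ell_p-\Err^{\ell,\star}\ge c\mu/\sqrt p$. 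The factor $1/2$ from strong convexity turns $1/36$ into $1/72$, provided the ball restriction is absorbed by the precise form of \eqref{regular1}.

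\textbf{Main obstacle.} I expect Step 1 to be the hardest: showing that the general-loss optimum over a two-dimensional span has the same direction as the least-squares optimum. This needs first-order optimality plus gaussian symmetry, with \eqref{regular2} ensuring that the scale is positive rather than flipping sign. A secondary difficulty is the local nature of strong convexity in Step 3.
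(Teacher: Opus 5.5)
Your architecture is the paper's: show each $\ell$-learner outputs a positive multiple of the MSE predictor, so spans and hence directions are inherited. Then note that the distance from $\be_k$ to the resulting line is $\sqrt{\Err^{\rm MSE}_p}$, and convert distance into excess loss via strong convexity.

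For Step 1 you use gaussian symmetry plus first-order conditions instead of the paper's Jensen-conditioning argument, and that route works. Let $\boldsymbol u$ be the coefficient vector of $\widehat Y_{i,\rm MSE}$ and take $\boldsymbol v$ in the span with $\langle\boldsymbol v,\boldsymbol u\rangle=0$. Then also $v_k=0$, so $\langle\boldsymbol v,\bZ\rangle$ is a symmetric gaussian independent of $(\langle\boldsymbol u,\bZ\rangle,Z_k)$. Hence $L$ is even along $\boldsymbol v$ through every $\gamma\boldsymbol u$, and convexity of the two-parameter program makes $\gamma^\star\boldsymbol u$ a global minimizer once $\gamma^\star$ minimizes along the line.

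Two things are missing in Step 1. First, you must check that the hypothesis of \eqref{regular2} holds. By Lemma~\ref{lem:invariants}, for $i\ge k$ we have $\widehat Y_{i,\rm MSE}=E+a_kZ_k$ with $E=\sum_{j<k}a_jZ_j$, ${\rm Var}(E)=a_k(1-a_k)$ and $a_k\in[\frac12,1)$; for $i<k$ the MSE predictor is $0$. Second, you need uniqueness, since otherwise the learner may output a minimizer off the projection line and Step 2 fails. Uniqueness follows from strong convexity of $L$ near $\gamma^\star\boldsymbol u$, which lies in the ball because $\gamma^\star\sqrt{a_k}<2$.

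The genuine gap is in Step 3. \eqref{regular1} is not strong convexity of $\ell$ on a ball of prediction values. It says that the population loss $L(\ba)=\bbE\ell(\sum_ja_jZ_j,Z_k)$ is differentiable, is $\mu$-strongly convex on $\{\ba:\|\ba\|_2\le2\}$, and is minimized at $\be_k$, so $\nabla L(\be_k)=0$. Your indicator-weighted bound and ``gaussian computation'' would at best give an unspecified $c\mu$ rather than $\mu/72$. You also leave open whether $\ba^\ell$ lies in the region where strong convexity holds.

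The missing step uses the \emph{upper} end of \eqref{regular2}, which you used only for the sign. Write $\ba^\ell=\gamma\ba^{\rm MSE}$ with $\gamma\in(0,2]$, and use $\|\ba^{\rm MSE}\|_2^2=a_k^{\rm MSE}$. Then
\[
\|\ba^\ell-\be_k\|_2^2=(1-\gamma)^2a_k^{\rm MSE}+(1-a_k^{\rm MSE})\le 1,
\]
so $\|\ba^\ell\|_2\le2$. Directly,
$L(\ba^\ell)-L(\be_k)\ge\frac{\mu}{2}\|\ba^\ell-\be_k\|_2^2\ge\frac{\mu}{2}\Err^{\rm MSE}_p\ge\frac{\mu}{72\sqrt p}$.

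Containment can also be bypassed. If $\ba^\ell$ were outside the ball, convexity along the segment from $\be_k$ would still apply. That segment leaves the ball at distance at least $1$ from $\be_k$, so the excess loss is at least $\frac{\mu}{2}\ge\frac{\mu}{2}\Err^{\rm MSE}_p$.
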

\section{Overall proof technique}\label{sec:outline}
We now describe the high level proof technique, and then derive each result in the subsequent sections. Note that the learnt predictor is a linear combination of the latent variables, $Z_1, Z_2, \ldots, Z_k$, which we will denote by $\sum_i a_i Z_i$ in the following, for coefficients $\ba = (a_1, \ldots, a_k) \in \bbR^k$. Further, we will denote the coefficients at the last vertex of pass $p$ as $\ba^{(p)}$. The first four steps apply to learning with $\ell_{\rm MSE}$, and the rest extend the arguments to other losses.
\begin{enumerate}[leftmargin=*,itemsep=0.55em]
\item In each pass $p$, we divide the vertices into three types: Type I is the starting vertex $k \cdot (p - 1) + 1$, Type II consists of the intermediate vertices, $k \cdot (p - 1) + i$ for $2 \leq i \leq k-1$, and Type III is the final vertex $k \cdot p$. We then compute a closed form expression for the coefficients at each vertex as a transformation of the coefficients of its predecessor, in Lemma~\ref{lem:vertex-maps}. Specifically, each vertex type corresponds to one such transformation $T$: $T_1$ sets $a_1$ to zero and rescales, $T_i$, for $2 \leq i \leq k - 1$ averages two adjacent coefficients in $\ba$ and rescales, and $T_k$ averages the last two coefficients in $\ba$, rescales, and adds $(0,\ldots,0,-\tfrac12,\tfrac12)$.

\item Lemma~\ref{lem:invariants} then derives the coefficient invariants at the end of every pass. In particular,
\begin{equation*}
 \sum_j a_j=0,\qquad
 a_k=1-\Err^{\rm MSE}_p,\qquad
 a_k-a_{k-1}=1,\qquad
 a_{k-1}=-\Err^{\rm MSE}_p.
\end{equation*}
It also identifies that only the last $p + 1$ coefficients are non-zero (in $\ba^{(p)}$) at the end of pass $p$, and that all of these are negative except the last coefficient $a^{(p)}_k$. 

\item The zero-sum invariant, along with the fact that $\Err^{\rm MSE}_p = \sum_{i = k - p}^{k-1} a_i^2 + (1 - a_k)^2$ at the last vertex of pass $p$, is enough to recover the lower bound in \cite{krr}. To see this, apply Cauchy--Schwarz to
\begin{equation*}
a_{k-p}+\cdots+a_{k-1}+(a_k-1)=-1,
\end{equation*}
to obtain $\Err^{\rm MSE}_p \geq \frac{1}{p + 1}$. 
Lemma~\ref{lem:cumulative} generalizes the above idea, applying Cauchy--Schwarz to the last $N+1$ coefficients instead,
\begin{equation*}
a_{k-N}+\cdots+a_{k-1}+(a_k-1)=c_{k-N}-1,
\qquad \text{ where }
c_i=\sum_{j=i}^k a_j,
\end{equation*}
and obtains
\begin{equation*}
\Err^{\rm MSE}_p\ge\frac{(1-c_{k-N})^2}{N+1}.
\end{equation*}
Now, it is enough to show that $c_{k-N}$ is low enough (say $1 - c_{k-N} \geq 1/3$), for some $N \lesssim \sqrt p$. 

\item To control $c_{k-N}$, Lemma~\ref{lem:pass-recurrence} obtains the recurrence relationship relating $\ba^{(p)}$ to all the previous $\ba^{(i)}, i < p$. This recurrence involves complicated non-linear terms in the coefficients through the normalization factors. However, a technical manipulation allows us to bypass analysis of the normalization terms by bounding the ratio of the partial sum of these coefficients to $(1 - \Err^{\rm MSE}_p)$. Lemma~\ref{lem:pass-recurrence} contains this argument, allowing us to obtain
\begin{equation*}
c_{k - N} \lesssim \exp \left(-\frac{N^2}{p}\right), 
\end{equation*}
completing our argument.
\end{enumerate}

The above argument can be neatly extended to more general losses satisfying some regularity conditions, including strong convexity, by utilizing a conditioning argument for regression under misspecified losses developed in \citet{li-duan1989}. This extension is presented in Section~\ref{sec:general-convex}. 

\section{Least Squares Regression}\label{sec:regression}
In this section, we analyze learning on \emph{any} problem instance in our worst case family, with the MSE loss $\ell_{\rm MSE} = (\widehat Y - Y)^2$, and provide details for steps (1-4) outlined in Section~\ref{sec:outline}. 
\subsection{Vertex Transformations}\label{sec:maps}
Every prediction is a linear combination of the latent variables $Z_1, \ldots, Z_k$. In the following, suppose that the prediction output by the parent of a vertex is
\begin{equation*}
\widehat Y=\sum_{j=1}^k a_j Z_j,
\qquad
\ba=(a_1,\ldots,a_k).
\end{equation*}
Then, Lemma~\ref{lem:vertex-maps} provides the prediction output of each vertex in terms of $\ba$. 
\begin{restatable}[Vertex Transformations]{lemma}{vertexMapsRestated}\label{lem:vertex-maps}
Define the linear operators $A_1, A_2, \ldots, A_k$, where $A_1\ba=(0,a_2,\ldots,a_k)$ set the first coefficient to zero, and $A_i\ba = \left(a_1, \ldots, \frac{1}{2}(a_{i - 1} + a_i), \frac{1}{2}(a_{i - 1} + a_i), a_{i+1}, \ldots, a_k\right)$ for $2 \leq i \leq k$ replace $a_{i-1}$ and $a_i$ by their average and leave the other coordinates unchanged. With the convention that $\frac{0}{0} = 0$, the transformation at the $i$-th vertex in pass $p$ is as follows.
\begin{enumerate}[label=(\roman*),leftmargin=*,itemsep=0.35em]
\item At the Type-I learner $k(p-1)+1$, assigned $X_{k(p-1)+1}=Z_1$,
\begin{equation*}
T_1(\ba)=s_1(\ba)A_1\ba,
\qquad
s_1(\ba)=\frac{a_k}{\sum_{j=2}^k a_j^2}.
\end{equation*}
\item At a Type-II learner $k(p-1)+i$, assigned $X_{k(p-1)+i}=Z_i-Z_{i-1}$ for $2\leq i\leq k-1$,
\begin{equation*}
T_i(\ba)=s_i(\ba)A_i\ba,
\qquad
s_i(\ba)=\frac{a_k}{\lVert A_i\ba\rVert_2^2}.
\end{equation*}
\item At the Type-III learner $kp$, assigned $X_{kp}=Z_k-Z_{k-1}$,
\begin{equation*}
T_k(\ba)=s_k(\ba)A_k\ba+\bb,
\qquad
s_k(\ba)=
\frac{a_{k-1}+a_k}
{2\sum_{j=1}^{k-2}a_j^2+(a_{k-1}+a_k)^2}
\qquad
\bb=(0,\ldots,0,-\tfrac12,\tfrac12).
\end{equation*}
\end{enumerate}
\end{restatable}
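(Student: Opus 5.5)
The plan is to work entirely in coefficient space. Because $\bbE Z_iZ_j=\mathds 1[i=j]$, the map $\sum_j a_jZ_j\mapsto \ba$ is an isometry from $\mathrm{span}(Z_1,\ldots,Z_k)\subset L^2$ onto $\bbR^k$ with the Euclidean inner product. Under this map the label is $Y=Z_k\mapsto \be_k$, the parent's prediction is $\ba$, and the local feature is a vector $\boldsymbol x$: namely $\be_1$ at Type I, and $\be_i-\be_{i-1}$ at Types II and III. Program \eqref{eq:vertex-program-mse} then asks for the point of $\mathrm{span}\{\boldsymbol x,\ba\}$ closest to $\be_k$. So $\widehat Y_i$ is the orthogonal projection of $\be_k$ onto this span. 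This projection is unique even when $(\alpha,\beta)$ is not, so the output map is well defined.

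The key step is to replace $\ba$ by a vector orthogonal to $\boldsymbol x$ spanning the same plane, and to check that this vector is exactly $A_i\ba$.
\begin{itemize}
\item For Type I we have $\ba=A_1\ba+a_1\be_1$ and $A_1\ba\perp\be_1$.
\item For $2\le i\le k$ we have $\ba=A_i\ba+\tfrac12(a_i-a_{i-1})(\be_i-\be_{i-1})$, and $A_i\ba\perp \be_i-\be_{i-1}$ because its $(i-1)$-th and $i$-th entries coincide.
\end{itemize}
Hence in every case $\mathrm{span}\{\boldsymbol x,\ba\}=\mathrm{span}\{\boldsymbol x,A_i\ba\}$ with the two generators orthogonal. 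The projection therefore splits as
\[
\frac{\langle \be_k,A_i\ba\rangle}{\lVert A_i\ba\rVert^2}A_i\ba+\frac{\langle \be_k,\boldsymbol x\rangle}{\lVert\boldsymbol x\rVert^2}\boldsymbol x .
\]
The first term is read as $0$ when $A_i\ba=0$, which is precisely the $\tfrac00=0$ convention.

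It remains to evaluate the two inner products in each case.
\begin{itemize}
\item For Types I and II we have $i\le k-1$, so $\langle\be_k,\boldsymbol x\rangle=0$ (using $k\ge2$), and $(A_i\ba)_k=a_k$. This gives $s_1=a_k/\sum_{j\ge2}a_j^2$ and $s_i=a_k/\lVert A_i\ba\rVert^2$.
\item For Type III, $\langle\be_k,\be_k-\be_{k-1}\rangle/\lVert \be_k-\be_{k-1}\rVert^2=\tfrac12$, which yields the additive vector $\bb$. Moreover $(A_k\ba)_k=\tfrac12(a_{k-1}+a_k)$ and $\lVert A_k\ba\rVert^2=\sum_{j\le k-2}a_j^2+\tfrac12(a_{k-1}+a_k)^2$. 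Multiplying numerator and denominator by $2$ gives the stated $s_k$.
\end{itemize}

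There is no real obstacle here. The only points needing care are the degenerate cases: when $\boldsymbol x$ and $\ba$ are collinear, or when $\ba=0$ (as for vertex $1$ with $\widehat Y_0=0$). These are handled by the uniqueness of the projection together with the $\tfrac00$ convention. One should also note that the first vertex of each pass uses the same formula as vertex $1$, since both have $\boldsymbol x=\be_1$.
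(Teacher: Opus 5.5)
Your proof is correct and is essentially the paper's argument. The paper fixes $\beta$ and minimizes over $\alpha$, which turns $\beta\ba$ into $\beta A_i\ba$ (plus $\bb$ at the Type-III vertex), and then minimizes over $\beta$; this is your observation that $A_i\ba$ is the component of $\ba$ orthogonal to the feature vector, written out in coordinates. Your coordinate-free Gram--Schmidt phrasing treats the three vertex types uniformly, and it makes explicit both why the output is well defined when $(\alpha,\beta)$ is not unique and why the $\frac00=0$ convention gives the right answer in the degenerate cases.
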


\begin{note}\label{note:scale-invariance}
Observe that each vertex map $T_i$ is invariant to positive scaling, i.e., for every $t>0$, $T_i(t\ba)=T_i(\ba)$. Consequently, when we compose the maps in one pass, we ignore all intermediate positive normalizations until the final Type-III update.
\end{note}

\subsection{Coefficient invariants}\label{sec:invariants}

Recall from Section~\ref{sec:outline} that
\begin{equation*}
\ba^{(p)}=(a_1^{(p)},\ldots,a_k^{(p)})
\end{equation*}
is defined to be the coefficient vector at the end of pass $p$. Lemma~\ref{lem:invariants} then obtains a few important properties of coefficient vectors that the learners output. 
\begin{restatable}[Coefficient invariants]{lemma}{coefficientInvariantsRestated}\label{lem:invariants}
For the coefficient vector $\ba$ returned by any vertex, the last coordinate is related to the {\rm MSE} error as
\begin{equation}\label{eq:ak-one-minus-err}
a_k=1-\Err^{\rm MSE},
\end{equation} %
the coefficients sum to zero,
\begin{equation}\label{eq:coefficient-zero-sum}
\sum_{j=1}^k a_j=0,
\end{equation}
and the squared norm is given by
\begin{equation}\label{eq:squared-sum}
\sum_{j=1}^k a_j^2=a_k.
\end{equation}

For the coefficient vector $\ba^i$ returned by vertex $k \leq i \leq k(k-1)$, we have that $\frac{1}{2} \leq a^i_k < 1$. Further, for $1 \leq p \leq k - 1$, for the coefficient vector $\ba^{(p)} = \ba^{p k}$ returned by the last vertex of pass $p$, the last two coordinates are related as
\begin{equation}\label{eq:last-coordinate-gap}
a_k^{(p)}-a_{k-1}^{(p)}=1,
\end{equation}
hence
\begin{equation}\label{eq:penultimate-coordinate}
a_{k-1}^{(p)}=-\Err^{\rm MSE}_p.
\end{equation}

Finally, the signs of the coefficient vector after each pass are given by
\begin{equation}\label{eq:coefficient-signs}
a_j^{(p)}=0\quad(j<k-p),
\qquad
a_j^{(p)}<0\quad(k-p\leq j<k),
\qquad
a_k^{(p)}\ge\frac12.
\end{equation}
\end{restatable}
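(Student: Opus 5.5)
The argument rests on one fact about least squares: each learner outputs an orthogonal projection of $Y$. Since the $Z_j$ are orthonormal, a predictor $\sum_j a_j Z_j$ is identified with the vector $\ba$ in $\bbR^k$ with the Euclidean inner product, and $Y$ is identified with $\be_k$. Each learner returns the projection of $\be_k$ onto a span $S$, here the span of its feature and its parent's predictor, so $\widehat Y\perp(\widehat Y-Y)$. This gives $\langle \ba,\ba\rangle=\langle\ba,\be_k\rangle=a_k$, which is \eqref{eq:squared-sum}. Then
\[
\Err^{\rm MSE}=\lVert \ba-\be_k\rVert^2=\lVert\ba\rVert^2-2a_k+1=1-a_k,
\]
which is \eqref{eq:ak-one-minus-err}.

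For the zero-sum invariant \eqref{eq:coefficient-zero-sum} I would use induction along the path with the vector $\mathbf 1=(1,\ldots,1)$. Every feature $Z_1$ and $Z_i-Z_{i-1}$ has coordinate sum $0$ except $X=Z_1$. So each predictor with zero coordinate sum produced by a Type II or III learner keeps zero sum, because it is a combination of zero-sum vectors. Type I is handled directly from Lemma~\ref{lem:vertex-maps}: $A_1$ zeroes $a_1$, and by the inductive hypothesis the parent has $a_1=0$ from pass $2$ on, so $A_1\ba=\ba$ and the sum stays $0$. The averaging operators $A_i$ preserve sums, and $\bb$ has zero sum.

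The base case needs care. In pass $1$, vertex $i$ only sees $Z_1,\ldots,Z_i$, so its predictor has $a_k=0$ until vertex $k$. By Lemma~\ref{lem:vertex-maps}, with the $0/0$ convention, these predictors are all zero, and vertex $k$ outputs $\bb$. Hence $\ba^{(1)}=\bb$, which has zero sum and satisfies $a_k=\tfrac12$ and \eqref{eq:last-coordinate-gap}.

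For $\frac12\le a_k<1$ at vertices $i\ge k$, I would use monotonicity. Each learner can reproduce its parent's predictor by setting $\beta=1$, so the error is nonincreasing along the path, and $a_k$ is nondecreasing. From vertex $k$ on, $a_k\ge\tfrac12$. The strict bound $a_k<1$ holds because $a_k=1$ would force zero error, i.e. $\ba=\be_k$, which has nonzero sum and contradicts \eqref{eq:coefficient-zero-sum}.

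For \eqref{eq:last-coordinate-gap}, the Type III output is $s_kA_k\ba+\bb$, and $A_k\ba$ has equal last two coordinates. So $a_k-a_{k-1}=\tfrac12-(-\tfrac12)=1$. Combining this with \eqref{eq:ak-one-minus-err} gives \eqref{eq:penultimate-coordinate}.

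The main obstacle is the sign pattern \eqref{eq:coefficient-signs}, which I would prove by induction on $p$ while tracking supports. Suppose $\ba^{(p)}$ is supported on $\{k-p,\ldots,k\}$, with negative entries on $k-p\le j<k$ and $a_k>0$. Then:
\begin{itemize}[leftmargin=*]
\item $T_1$ changes nothing, because $a_1=0$ (this needs $p\le k-2$ so that $k-p\ge2$, which holds when passing to pass $p+1\le k-1$).
\item The Type II maps, applied in order $i=2,\ldots,k-1$, each average adjacent coordinates $(i-1,i)$. The positive scale $s_i=a_k/\lVert\cdot\rVert^2>0$ does not affect signs (Note~\ref{note:scale-invariance}). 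When $i=k-p$, the zero at position $k-p-1$ is averaged with a negative value, giving two negatives and extending the support by one. Subsequent averages keep these entries negative.
\item The last average, at $i=k$, mixes $a_{k-1}<0$ with $a_k>0$. Its sign is unclear, but it is then shifted by $\bb$. The resulting $a_k$ is at least $\tfrac12$ by the monotonicity above, and $a_{k-1}=a_k-1<0$ because $a_k<1$.
\end{itemize}
I would also need $s_k>0$ so that the earlier negatives stay negative. This amounts to $a_{k-1}+a_k>0$ before the final average. I expect to get it from $\lVert\ba\rVert^2=a_k$ and the zero sum, which give $a_k\ge a_k^2+a_{k-1}^2$ with $a_k\ge\tfrac12$. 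If that does not suffice, I would argue that the optimal $\beta$ is positive, since $\widehat Y$ is positively correlated with $Y$.
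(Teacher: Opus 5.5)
Your plan follows the paper's proof essentially step for step: the projection identity giving $\sum_j a_j^2=a_k$ and $\Err^{\rm MSE}=1-a_k$, monotonicity of the error for $a_k\ge\tfrac12$, excluding $a_k=1$ via the zero sum, the form of $T_k$ for the unit gap, and a support/sign induction over passes. The step that is not closed is the one you flag, $s_k>0$ at the Type-III vertex. It matters: if $a_{k-1}+a_k=0$, the Type-III output would be just $\bb$, and every negative entry at positions $j\le k-2$ would be erased. Your first route only gives $a_{k-1}^2\le a_k(1-a_k)\le a_k^2$, i.e.\ $a_{k-1}+a_k\ge 0$, with equality exactly when $\ba=\bb$. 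So by itself it does not give strictness, and it never actually uses the zero sum. Your fallback is not valid. In a two-regressor least-squares fit, the sign of $\beta$ is governed by partial, not marginal, correlation. Here the residual of the parent predictor after projecting out $Z_k-Z_{k-1}$ is $A_k\ba$, whose $Z_k$-coefficient is $\tfrac12(a_{k-1}+a_k)$. So positivity of $\beta$ is equivalent to the very inequality you are trying to prove, and $a_k>0$ does not imply it.

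The fix, which is the paper's argument, uses only what you already have at that point. The vector entering the Type-III vertex of pass $p+1$ is zero-sum. By your Type-II step, its entries at $j\le k-2$ are non-positive, with $a_{k-p-1}<0$ and $k-p-1\le k-2$. Hence
\[
a_{k-1}+a_k=-\sum_{j=1}^{k-2}a_j>0 .
\]
Equivalently, $\sum_{j\le k-2}a_j^2>0$ makes your inequality $a_{k-1}^2\le a_k(1-a_k)$ strict. Two small organizational points. First, the fact $a_1=0$ that your zero-sum induction invokes is a support statement. Either prove the support bound first (each pass extends the support leftward by at most one position, regardless of the signs of the scales), or carry all the invariants in a single induction along the path. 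Second, $T_1$ is the identity because $s_1=a_k/\lVert\ba\rVert^2=1$ by the squared-norm identity, not merely because $a_1=0$; for the sign argument only $s_1>0$ is needed anyway.
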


\subsection{Cumulative-sum lower bound}\label{sec:generic}

Fix a pass $1 \leq p \leq k - 1$ and define
\begin{equation*}
c_i^{(p)}=\sum_{j=i}^k a_j^{(p)},\qquad 1\leq i\leq k.
\end{equation*}

\begin{restatable}[Cumulative-sum bound]{lemma}{cumulativeBoundRestated}\label{lem:cumulative}
For every integer $0\leq N\leq k-1$,
\begin{equation}\label{eq:cumulative-bound}
\Err^{\rm MSE}_p\ge\frac{(1-c_{k-N}^{(p)})^2}{N+1}.
\end{equation}
\end{restatable}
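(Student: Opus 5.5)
We prove the bound by applying Cauchy--Schwarz to the last $N+1$ coordinates of $\ba^{(p)}$, exactly as sketched in step 3 of Section~\ref{sec:outline}. First we express the error in terms of the coefficients. Since $\widehat Y_{pk}=\sum_j a^{(p)}_j Z_j$ and $Y=Z_k$, and the $Z_j$ are orthonormal in $L^2$,
\begin{equation*}
\Err^{\rm MSE}_p=\bbE\Bigl(\sum_{j=1}^{k-1}a^{(p)}_jZ_j+(a^{(p)}_k-1)Z_k\Bigr)^2=\sum_{j=1}^{k-1}\bigl(a^{(p)}_j\bigr)^2+\bigl(1-a^{(p)}_k\bigr)^2 .
\end{equation*}
Dropping the nonnegative terms with $j<k-N$ gives
\begin{equation*}
\Err^{\rm MSE}_p\ge\sum_{j=k-N}^{k-1}\bigl(a^{(p)}_j\bigr)^2+\bigl(1-a^{(p)}_k\bigr)^2 .
\end{equation*}

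The remaining sum has $N+1$ squared terms. We apply Cauchy--Schwarz to the vector $(a^{(p)}_{k-N},\ldots,a^{(p)}_{k-1},a^{(p)}_k-1)$ against the all-ones vector in $\bbR^{N+1}$:
\begin{equation*}
\Bigl(\sum_{j=k-N}^{k-1}a^{(p)}_j+(a^{(p)}_k-1)\Bigr)^2\le (N+1)\Bigl(\sum_{j=k-N}^{k-1}\bigl(a^{(p)}_j\bigr)^2+\bigl(a^{(p)}_k-1\bigr)^2\Bigr).
\end{equation*}
The left-hand side equals $(c^{(p)}_{k-N}-1)^2$ by the definition of $c^{(p)}_{k-N}$. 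Dividing by $N+1$ and combining with the previous display yields \eqref{eq:cumulative-bound}.

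The edge case $N=0$ reads $\Err^{\rm MSE}_p\ge (1-a^{(p)}_k)^2$, which the first display covers directly. The case $N=k-1$ is covered as well: there \eqref{eq:coefficient-zero-sum} gives $c^{(p)}_1=0$, and the bound reduces to $\Err^{\rm MSE}_p\ge 1/k$.

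No step is a real obstacle. The only point needing care is the error identity, which uses only the orthonormality of the $Z_j$ from the worst-case family and no further distributional assumption. We could instead rewrite the error via \eqref{eq:ak-one-minus-err} and \eqref{eq:squared-sum}, but the direct expansion is cleaner and gives the needed inequality immediately.
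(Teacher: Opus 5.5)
Your proposal is correct and matches the paper's proof: it expands $\Err^{\rm MSE}_p$ as a sum of squared residual coefficients using orthonormality of the $Z_j$. It then applies Cauchy--Schwarz to the last $N+1$ residual coefficients, whose sum is $c^{(p)}_{k-N}-1$, and discards the remaining nonnegative terms.
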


Taking $N=p$, equations \eqref{eq:coefficient-zero-sum} and \eqref{eq:coefficient-signs} give $c_{k-p}^{(p)}=0$, and therefore
\begin{equation}\label{eq:krr-bound}
\Err^{\rm MSE}_p\ge\frac1{p+1},
\end{equation}
showing the error lower bound of \citet[Theorem~5.3]{krr} is valid for any instance in our worst case family. Our improvement over \eqref{eq:krr-bound} comes from showing that $c_{k-N}^{(p)}$ is small even for $N \lesssim \sqrt p$.

\subsection{Controlling the cumulative sum}\label{sec:control}

In this section, we upper bound $c^{(p)}_{k-N}$. In principle, one could directly try to iterate the recurrence $a^{(p)} = T_k \circ T_{k-1} \circ \ldots \circ T_1 (a^{(p-1)})$ to obtain $a^{(p)}$, and calculate the cumulative sum from the resultant vector. This direct approach does not seem to work, as keeping track of the normalization factors accumulating at each pass is difficult. In the following, we instead tightly utilize the invariances we have developed in Lemma~\ref{lem:invariants} to circumvent tracking the normalization factors, and solve a more manageable recurrence.

For ease of exposition, we change coordinates to count from the right edge of the vector $\ba^{(p)}$. We ignore the only positive entry $a_k^{(p)}$, since it is determined by the remaining entries through the zero-sum invariant. Define
\begin{equation}\label{eq:edge-profile}
w_i^{(p)}=-a_{k-i}^{(p)},\qquad 1\leq i\leq k-1.
\end{equation}
From the coefficient invariances \eqref{eq:ak-one-minus-err}, \eqref{eq:coefficient-zero-sum}, \eqref{eq:penultimate-coordinate}, and \eqref{eq:coefficient-signs}, we get $w_i^{(p)}\ge0$ for all $1\leq i\leq k-1$,
\begin{equation}\label{eq:edge-facts}
w_1^{(p)}=\Err^{\rm MSE}_p, \qquad
\sum_{i=1}^{k-1}w_i^{(p)}=c_k^{(p)} = 1-\Err^{\rm MSE}_p, \qquad
\sum_{i=N+1}^{k-1}w_i^{(p)} = c^{(p)}_{k-N}.
\end{equation}

\paragraph{Proof Strategy.} We pause here to comment on the proof strategy, as it involves an important manipulation. We will show that $c^{(p)}_{k - N}/{c^{(p)}_k} \leq \gamma_p(N)$ for some small $\gamma_p(N)$, which will then imply via \eqref{eq:edge-facts} that $c^{(p)}_{k - N} \leq \gamma_p(N) (1 - \Err_p^{\rm MSE}) < \gamma_p (N)$. This manipulation incurs a slack of $\Err_p^{\rm MSE}$ due to the second inequality, but, as we will see, allows us to go around reasoning about the scaling altogether, due to the ratio $c^{(p)}_{k - N}/{c^{(p)}_k}$.
\begin{restatable}[One-pass recurrence]{lemma}{onePassRecurrenceRestated}\label{lem:pass-recurrence}
There are positive normalization factors $\eta_p$, and a linear transformation $\cT: \bbR^{k-1} \to \bbR^{k-1}$ such that for $1 \leq p \leq k - 2$, 
\begin{equation}\label{eq:vector-recurrence}
\bw^{(p+1)}
=\eta_p\cT\bw^{(p)}+\Err^{\rm MSE}_{p+1}\be_1,
\qquad
\bw^{(1)}=\tfrac12\be_1,
\end{equation}
where
\begin{equation}\label{eq:T-def}
(\cT w)_i=
\begin{cases}
0, & i=1,\\
\displaystyle\sum_{t=0}^{k-i}2^{-(t+1)}w_{i-1+t},
& 2\leq i\leq k-1.
\end{cases}.
\end{equation}

Consequently (letting an empty product equal $1$),
\begin{equation}\label{eq:positive-mixture}
\bw^{(p + 1)}
=\sum_{s=1}^{p + 1}
\Err^{\rm MSE}_s\left(\prod_{r=s}^{p}\eta_r\right)
\cT^{p + 1 - s}\be_1.
\end{equation}
\end{restatable}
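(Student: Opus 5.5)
The plan is to compose the vertex maps of Lemma~\ref{lem:vertex-maps} over one full pass, starting from $\ba^{(p)}$. By Note~\ref{note:scale-invariance}, all intermediate positive normalizations can be ignored, so only the linear operators $A_1,\dots,A_k$ and one overall scalar before the final translation by $\bb$ need to be tracked. The resulting linear part, rewritten in the edge coordinates $w_i=-a_{k-i}$, should be exactly $\cT$. The first coordinate is then handled separately using \eqref{eq:penultimate-coordinate}.

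\textbf{Composing $A_{k-1}\cdots A_2A_1$.} First, $A_1$ zeroes $a_1$; this is harmless because $a^{(p)}_1=0$ already by \eqref{eq:coefficient-signs}, since $k-p\ge 2$ for $p\le k-2$. Next I would track the left-to-right sweep $A_2,\dots,A_{k-1}$ through a running average. Set $m_1=0$ and $m_j=\tfrac12(m_{j-1}+a_j)$. After $A_j$ is applied, coordinates $j-1$ and $j$ both equal $m_j$, and coordinate $j-1$ is never touched again. Hence after $A_{k-1}$ the vector is $(m_2,m_3,\dots,m_{k-1},m_{k-1},a_k)$. Unrolling gives $m_j=\sum_{l\ge 2}2^{-(j-l+1)}a_l$. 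Finally $A_k$ replaces the last two coordinates by $\tfrac12(m_{k-1}+a_k)$, then the overall scalar $\eta$ is applied and $\bb$ is added. Since $\bb$ only affects coordinates $k-1$ and $k$, for $2\le i\le k-1$ we get $a^{(p+1)}_{k-i}=\eta\, m_{k-i+1}$. Substituting $a_{k-i+1-t}=-w_{i-1+t}$ gives
\[
w^{(p+1)}_i=\eta\sum_{t=0}^{k-i}2^{-(t+1)}w^{(p)}_{i-1+t}.
\]
The range of $t$ is fixed by $l\ge 1$, and the $l=1$ term vanishes anyway. This is $\eta(\cT\bw^{(p)})_i$. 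For $i=1$, $w^{(p+1)}_1=-a^{(p+1)}_{k-1}=\Err^{\rm MSE}_{p+1}$ by \eqref{eq:penultimate-coordinate}, which supplies the term $\Err^{\rm MSE}_{p+1}\be_1$ and is why $(\cT\bw)_1=0$.

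\textbf{Positivity of $\eta_p$.} This is the one delicate point. The factor $\eta_p$ is the product of the $s_i$'s evaluated along the pass. The intermediate factors $s_i=a_k/\lVert A_i\ba\rVert^2$ are nonnegative because $a_k\ge\tfrac12$ at every vertex with index at least $k$, by Lemma~\ref{lem:invariants}. For strict positivity of the whole product I would avoid analyzing $s_k$ directly and argue through signs instead. The vector $\cT\bw^{(p)}$ has coordinate $2$ at least $\tfrac12 w^{(p)}_1=\tfrac12\Err^{\rm MSE}_p>0$. By \eqref{eq:coefficient-signs} at pass $p+1$, $w^{(p+1)}_2>0$. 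Together these force $\eta_p>0$. (Consistently, $a_k^{(p+1)}=\tfrac12+\eta\cdot\tfrac12(m_{k-1}+a_k)\ge\tfrac12$.)

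\textbf{Base case and unrolling.} For the base case, pass $1$ starts from $\ba=0$. Under the convention $0/0=0$, every map $T_1,\dots,T_{k-1}$ returns $0$, and $T_k$ returns $\bb$. Hence $\bw^{(1)}=\tfrac12\be_1$, which also equals $\Err^{\rm MSE}_1\be_1$. Formula \eqref{eq:positive-mixture} then follows by a straightforward induction on $p$, applying \eqref{eq:vector-recurrence} and linearity of $\cT$. Each new term $\Err^{\rm MSE}_s\be_1$ picks up the factors $\eta_r$ and one power of $\cT$ for every later pass.
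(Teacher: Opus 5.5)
Your proposal is correct and follows essentially the same route as the paper: you compose the pass through a running average (your $m_j$ is the paper's $m_{j-1}$), read off $w_i^{(p+1)}=\eta_p(\cT\bw^{(p)})_i$ for $i\geq 2$ after reindexing, take $w_1^{(p+1)}=\Err^{\rm MSE}_{p+1}$ from \eqref{eq:penultimate-coordinate}, and unroll by induction. The one small difference is how you show $\eta_p>0$: you do it after the fact from the sign pattern \eqref{eq:coefficient-signs} at pass $p+1$, using $w_2^{(p+1)}>0$ and $(\cT\bw^{(p)})_2\geq\tfrac12\Err^{\rm MSE}_p>0$, whereas the paper uses the positivity of the Type-III scale from the proof of Lemma~\ref{lem:invariants}, where the numerator $a_{k-1}+a_k>0$ by the zero-sum invariant; both arguments are valid.
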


Lemma~\ref{lem:pass-recurrence} thus shows that the vector $\bw^{(p+1)}$ is a weighted sum of the $p+1$ vectors $\cT^n\be_1$ for $0\leq n\leq p$. This is useful for our purposes, as the cumulative sums of $\bw^{(p+1)}$ is then equal to a weighted sum of the cumulative sums of $\cT^n\be_1$. The following lemma lets us use this structure, by calculating the cumulative sums of $\cT^n\be_1$.

\begin{restatable}[Cumulative sums]{lemma}{partialSumsRestated}\label{lem:spike-tail}
For $0\leq p\leq k-2$ and $0\leq N\leq k-1$, define the cumulative sum
\begin{equation}\label{eq:partial-sum-definition}
\frakc(p,N)
=\sum_{i=N+1}^{k-1}\left(\cT^p\be_1\right)_i.
\end{equation}
Then, we have
\begin{equation}\label{eq:spike-tail}
\frakc(p,N)=
\begin{cases}
\displaystyle
\binom{2p-N}{p}2^{-(2p-N)},&0\leq N\leq p,\\[0.8em]
0,&p<N\leq k-1.
\end{cases}
\end{equation}
Consequently, for $1\leq N\leq p\leq k-2$,
\begin{equation}\label{eq:tail-ratio}
\frac{\frakc(p,N)}{\frakc(p,0)}
=\prod_{j=0}^{N-1}\frac{2(p-j)}{2p-j}
\leq\exp \left(-\frac{N(N-1)}{4p}\right).
\end{equation}
\end{restatable}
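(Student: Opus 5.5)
The plan is to turn the linear map $\cT$ into a recurrence for the cumulative sums themselves, and then check that the binomial expression solves it.

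\textbf{Step 1: cumulative sums of $\cT\bw$.} Fix $\bw$ with $w_{k-1}=0$ and write $C_N(\bw)=\sum_{i=N+1}^{k-1}w_i$. I will check that $\cT^q\be_1$ is supported on $\{1,\ldots,q+1\}$. This holds because $(\cT w)_i$ only uses $w_{i-1},w_i,\ldots$, so $\cT$ shifts the support right by at most one. Hence for $q\le k-3$ the input to $\cT$ has vanishing last coordinate, and the truncation $t\le k-i$ in \eqref{eq:T-def} never cuts off a nonzero term.

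Exchanging the order of summation then gives, for $N\ge 1$,
\[
C_N(\cT\bw)=\sum_{j\ge N} w_j\sum_{m=1}^{j-N+1}2^{-m}=\sum_{j\ge N}w_j\bigl(1-2^{-(j-N+1)}\bigr).
\]
Subtracting half of the same expression at $N+1$ makes the geometric tails cancel, and yields
\[
C_N(\cT\bw)-\tfrac12C_{N+1}(\cT\bw)=\tfrac12\sum_{j\ge N}w_j=\tfrac12C_{N-1}(\bw).
\]
Since $(\cT\bw)_1=0$, we also have $C_0(\cT\bw)=C_1(\cT\bw)$.

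\textbf{Step 2: recurrence for $\frakc$.} Step 1 gives, for $N\ge1$,
\[
\frakc(p+1,N)=\tfrac12\frakc(p,N-1)+\tfrac12\frakc(p+1,N+1),
\]
together with the boundary condition $\frakc(p+1,0)=\frakc(p+1,1)$ and the support condition $\frakc(p,N)=0$ for $N>p$. For fixed $p+1$, these relations determine $\frakc(p+1,\cdot)$ uniquely from $\frakc(p,\cdot)$ by backward induction in $N$, starting from the zero region $N\ge p+2$. The base case $\frakc(0,N)=\mathds 1[N=0]$ holds trivially.

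\textbf{Step 3: verify the closed form.} I will show that $g(p,N)=\binom{2p-N}{p}2^{-(2p-N)}$ for $N\le p$, and $g=0$ otherwise, satisfies the same system. For $1\le N\le p+1$, set $n=2p+1-N$. The recurrence then reads
\[
\binom{n+1}{p+1}2^{-(n+1)}=\tfrac12\binom{n}{p}2^{-n}+\tfrac12\binom{n}{p+1}2^{-n},
\]
which is exactly Pascal's rule. At $N=p+1$, the term $g(p+1,p+2)$ is $0$ and $\binom{n}{p+1}=\binom{p}{p+1}=0$, so the edge case is consistent. The boundary identity $g(p+1,0)=g(p+1,1)$ follows from $\binom{2q-1}{q}=\tfrac12\binom{2q}{q}$. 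This gives \eqref{eq:spike-tail}.

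\textbf{Step 4: the ratio.} Computing directly,
\[
\frac{\frakc(p,N)}{\frakc(p,0)}=2^N\frac{\binom{2p-N}{p}}{\binom{2p}{p}}=\prod_{j=0}^{N-1}\frac{2(p-j)}{2p-j}.
\]
Each factor equals $1-\frac{j}{2p-j}\le 1-\frac{j}{2p}\le e^{-j/(2p)}$. Summing the exponents over $j$ gives $\exp(-N(N-1)/(4p))$.

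The main obstacle is Step 1: finding the summation-by-parts identity that makes the geometric weights telescope, and making sure the truncation at index $k-1$ is harmless via the support argument. Once the two-term recurrence is in hand, the rest is Pascal's rule and an elementary estimate.
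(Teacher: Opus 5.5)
Your proof is correct, and it takes a different route from the paper after the first step.

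\textbf{What is shared.} The paper also starts from a two-term local recurrence, $v_{i,j}=\frac12 v_{i-1,j-1}+\frac12 v_{i,j+1}$, for the entries $v_{i,j}=(\cT^{i-1}\be_1)_j$. Your identity $C_N(\cT\bw)-\frac12 C_{N+1}(\cT\bw)=\frac12 C_{N-1}(\bw)$ is exactly this recurrence summed over $j\ge N+1$. The final ratio estimate in your Step 4 is also the same as the paper's.

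\textbf{How the paper gets the closed form.} It derives the formula rather than checking it. It builds the bivariate generating function and solves the functional equation by the kernel method, giving $V(x,y)=xy/(1-y(1-\sqrt{1-x}))$. It then reduces the tail to $\frakc(p,N)=[x^p]\,u^N/(1-u)$ with $u=1-\sqrt{1-x}$. Finally it extracts this coefficient using $u'=\frac{1}{2(1-u)}$ and Lagrange inversion.

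\textbf{How you get it.} Since the target formula is given in the statement, you only verify it. You use Pascal's rule for $1\le N\le p+1$, including the edge case $\binom{p}{p+1}=0$. You use $\binom{2q-1}{q}=\frac12\binom{2q}{q}$ for the boundary condition at $N=0$. You get uniqueness by backward induction in $N$, starting from the support region $N\ge p+2$.

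\textbf{Comparison.} Your argument is shorter and entirely elementary. It also makes explicit the support argument ($\cT^q\be_1$ lives on $\{1,\ldots,q+1\}$, so $w_{k-1}=0$ whenever $q\le k-3$). That argument justifies ignoring the truncation at index $k-1$, which the paper's generating-function argument uses only implicitly. The paper's approach has the advantage of being constructive: it produces the formula instead of presupposing it.

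\textbf{A small wording point.} The inner truncation $t\le k-i$ in \eqref{eq:T-def} never drops anything, since it simply stops at the last coordinate $w_{k-1}$. What the hypothesis $w_{k-1}=0$ actually protects against is the missing output coordinate $i=k$. That is the $m=1$ term for $j=k-1$ in your exchanged sum, and your hypothesis handles it correctly.
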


The proof of Lemma~\ref{lem:spike-tail} is done cleanly using the theory of generating functions, and is deferred to Appendix~\ref{app:partial-sums-proof}. We now have the machinery needed to prove our main Theorem~\ref{thm:main-mse}.
\begin{proof}[Proof of Theorem~\ref{thm:main-mse}]
Note that for $p \leq 3$, we have $\frac{1}{p + 1} \geq \frac{1}{36 \sqrt{p}}$, so \eqref{eq:krr-bound} is sufficient to show the result. For $p \geq 4$, we have
\begin{align}
c_{k-N}^{(p)}
&\overset{\eqref{eq:edge-facts}}{=}\sum_{i=N+1}^{k-1}w_i^{(p)}\notag\\
&\overset{\eqref{eq:positive-mixture}}{=}\sum_{i=N+1}^{k-1}\sum_{s=1}^p
\Err^{\rm MSE}_s\left(\prod_{r=s}^{p-1}\eta_r\right)
\left(\cT^{p-s}\be_1\right)_i\notag\\
&\overset{\eqref{eq:partial-sum-definition}}{=}\sum_{s=1}^p
\underbrace{\Err^{\rm MSE}_s\left(\prod_{r=s}^{p-1}\eta_r\right)}_{>0}
\frakc(p-s,N)\notag\\
&
\overset{\eqref{eq:tail-ratio}}{\leq}
\exp \left(-\frac{N(N-1)}{4(p-1)}\right)
\sum_{s=1}^p
\Err^{\rm MSE}_s\left(\prod_{r=s}^{p-1}\eta_r\right)
\frakc(p-s,0) \label{eq:pszero}\\
&=
\exp \left(-\frac{N(N-1)}{4(p-1)}\right)
\sum_{i = 1}^{k - 1}\left(
\sum_{s=1}^p
\Err^{\rm MSE}_s\left(\prod_{r=s}^{p-1}\eta_r\right)
\cT^{p - s} \be_1
\right)_i\notag\\
&\overset{\eqref{eq:edge-facts}}{=}
\exp \left(-\frac{N(N-1)}{4(p-1)}\right)
(1-\Err^{\rm MSE}_p)\notag\\
&\leq
\exp \left(-\frac{N(N-1)}{4(p-1)}\right),
\label{eq:c-tail-bound}
\end{align}
where \eqref{eq:pszero} follows from \eqref{eq:tail-ratio} when $1 \leq N \leq p - s \leq p - 1$, and $\frakc(p - s, N) = 0$ when $N > p - s$.
Let $N=\lceil2\sqrt p\rceil$. Then $N \geq 2 \sqrt{p} \geq 2$ and $N - 1 \geq 2 \sqrt{p} - 1 \geq \sqrt{p}$, implying $N (N - 1) \geq 2p \geq 2(p - 1)$. Equation \eqref{eq:c-tail-bound} now yields
\begin{equation*}
c_{k-N}^{(p)}\leq e^{-1/2}<\frac23.
\end{equation*}
Now applying Lemma~\ref{lem:cumulative}, we have
\begin{equation*}
\Err^{\rm MSE}_p
\ge\frac{(1-c_{k-N}^{(p)})^2}{N+1}
\ge\frac{(1/3)^2}{4\sqrt p}
=\frac1{36\sqrt p}.
\end{equation*}
Finally, for $p = 1$, we have $\Err_p^{\rm MSE} = 1/2$, which satisfies the above. 
\end{proof}

\paragraph{Improving constants. } The constant $1/36$ is loose in the above: Theorem~\ref{thm:regression-improved-constants} performs improved book-keeping to show a tighter lower bound.
\begin{restatable}{theorem}{regressionConstantsRestated}\label{thm:regression-improved-constants}
For every $2\leq p\leq k-1$ and $t>0$ such that $\lceil t\sqrt{p-1}+1\rceil\leq k-1$,
\begin{equation}\label{eq:constant-family-t}
\Err^{\rm MSE}_p \geq \frac{(1-e^{-t^2/4})^2}{t\sqrt{p-1}+3}.
\end{equation}
\end{restatable}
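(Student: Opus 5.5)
We redo the proof of Theorem~\ref{thm:main-mse}, keeping $N$ free and not rounding constants. Fix $2\le p\le k-1$ and $t>0$, and set $N=\lceil t\sqrt{p-1}+1\rceil$. The hypothesis gives $N\le k-1$, so Lemma~\ref{lem:cumulative} applies. The chain of (in)equalities leading to \eqref{eq:c-tail-bound} holds verbatim for any $1\le N\le k-1$. Indeed, each term $\frakc(p-s,N)$ with $s\le p$ is either zero (when $N>p-s$) or bounded via \eqref{eq:tail-ratio} by $\exp(-N(N-1)/(4(p-s)))\frakc(p-s,0)$. Since $p-s\le p-1$, this is at most $\exp(-N(N-1)/(4(p-1)))\frakc(p-s,0)$, and the $s=p$ term, where $p-s=0$, vanishes because $N\ge1$. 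Summing, using positivity of the weights $\Err^{\rm MSE}_s\prod\eta_r$ and then \eqref{eq:edge-facts}, gives
\[
c^{(p)}_{k-N}\le \exp\!\left(-\frac{N(N-1)}{4(p-1)}\right)(1-\Err^{\rm MSE}_p)\le \exp\!\left(-\frac{N(N-1)}{4(p-1)}\right).
\]

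By the choice of $N$ we have $N-1\ge t\sqrt{p-1}$ and $N\ge t\sqrt{p-1}+1> t\sqrt{p-1}$. Hence $N(N-1)\ge t^2(p-1)$, and the exponent is at most $-t^2/4$, so $c^{(p)}_{k-N}\le e^{-t^2/4}<1$. Lemma~\ref{lem:cumulative} then gives $\Err^{\rm MSE}_p\ge (1-e^{-t^2/4})^2/(N+1)$. The ceiling gives $N< t\sqrt{p-1}+2$, so $N+1< t\sqrt{p-1}+3$, which yields \eqref{eq:constant-family-t}.

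The main point of care is the first step. We need \eqref{eq:tail-ratio} in a form valid for every $p-s\ge N$, including $p-s$ strictly smaller than $p-1$, and we need monotonicity of the bound in the denominator so that $p-s$ can be replaced by $p-1$. Both follow because $\exp(-N(N-1)/(4q))$ is increasing in $q$. One also has to handle $p=2$, where $p-1=1$, and confirm that the $s=p$ term is excluded by $N\ge1$, which holds since $N\ge2$.
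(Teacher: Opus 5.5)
Your proposal is correct and follows the paper's proof essentially verbatim. You keep the exact $N$-dependence in \eqref{eq:c-tail-bound}, combine it with Lemma~\ref{lem:cumulative}, and choose $N=\lceil t\sqrt{p-1}+1\rceil$ so that $N(N-1)\ge t^2(p-1)$ and $N+1\le t\sqrt{p-1}+3$. Your explicit check that the chain behind \eqref{eq:c-tail-bound} holds for all $2\le p\le k-1$ is a useful addition: you use that $q\mapsto e^{-N(N-1)/(4q)}$ is increasing for $q=p-s\le p-1$, and that the terms with $N>p-s$ (including $s=p$) vanish. The paper only asserts this extension to $p=2,3$ without checking it.
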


Using $t=3$ in Theorem~\ref{thm:regression-improved-constants}, gives approximately $\Err^{\rm MSE}_p \geq \frac{1}{3.75\sqrt p}$ when $p$ and $k$ are large enough.

\section{Regression with a Convex Loss} \label{sec:general-convex}
In this section, we analyze learning with a general loss $\ell(\widehat Y, Y)$ that is convex in the first argument, subject to a few regularity conditions, which are satisfied by several common losses. Unlike Section~\ref{sec:regression}, we will specialize to the gaussian instance in our worst case family, assuming $Z_1, \ldots, Z_k \overset{\text{i.i.d.}}{\sim} \cN(0,1)$. Recall that $X_{k(p - 1) + j} = Z_{j} - Z_{j - 1}$ for $1 \leq j \leq k, 1 \leq p \leq k - 1$, and the predictor output by learner $i$ is $\widehat Y_i = \alpha_i X_i + \beta_i \widehat Y_{i - 1}$, where $(\alpha_i, \beta_i)$ is the solution to
\begin{equation}\label{eq:vertex-program-general}
\min_{\alpha,\beta\in\bbR}
\bbE \ell \bigl(\alpha X_i + \beta \widehat Y_{i - 1}, Z_k \bigr),
\end{equation}
with $\widehat Y_0 = 0, Z_0 = 0$. Note that for all $i$, both $X_i$ and $\widehat Y_{i-1}$ are linear combinations of $\{Z_1, \ldots, Z_k\}$. Defining $L(\ba) = \bbE \ell (\sum_j a_j Z_j, Z_k)$, our first regularity condition requires that the population loss be differentiable, strongly convex on a radius $2$ neighborhood around the origin, and minimized at the true label:
\begin{equation}
L \text{ is differentiable, } \mu \text{-strongly convex on } \{\ba \colon \|\ba\|_2 \leq 2\}, \text{ and minimized at } \be_k \tag{\textsc{Regular-I}} \label{regular1}
\end{equation}

Further, the second regularity condition we impose on $\ell$ is that the predictor minimizing the population loss be positively correlated with the label, i.e., for a random variable $E \sim \cN(0, \gamma_0(1 - \gamma_0))$ independent of $Z_k$, and scalar $\frac12 \leq \gamma_0 \leq 1$, we have
\begin{equation}
\min_{\gamma \in \bbR} \bbE \ell(\gamma (E + \gamma_0 Z_k), Z_k) \text{ is attained at } \gamma \in (0, 2] \tag{\textsc{Regular-II}} \label{regular2}
\end{equation}

Using \eqref{regular1} and \eqref{regular2}, we give an inductive argument showing that learning with $\ell$ incurs a $\Omega(1/\sqrt{p})$ error relative to the best linear predictor, matching the dependence that we saw for learning with ${\rm MSE}$ in Section~\ref{sec:regression}. In particular, we will prove the following:
\begin{lemma} \label{lem:scale}
$\widehat Y_i$ is a \emph{positive} scaled multiple of $\widehat Y_{i, \rm MSE}$ for all $i \geq 0$, where $\widehat Y_{i, \rm MSE}$ and $\widehat Y_i$ are the predictors output by the learner $i$ when learning with ${\rm MSE}$ \eqref{eq:vertex-program-mse}, and $\ell$ \eqref{eq:vertex-program-general}, respectively.
\end{lemma}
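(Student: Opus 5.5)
The plan is induction on $i$. The tool is the conditioning argument of \citet{li-duan1989}: for jointly Gaussian features, the population minimizer of a convex loss is parallel to the least-squares direction. The base case $i=0$ is trivial, since $\widehat Y_0=0=\widehat Y_{0,\rm MSE}$; here and throughout, $0$ counts as a positive multiple of $0$, consistent with the $\frac00=0$ convention of Lemma~\ref{lem:vertex-maps}.

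\textbf{Reduction to the MSE subspace.} Suppose $\widehat Y_{i-1}=t\,\widehat Y_{i-1,\rm MSE}$ with $t>0$, or both are zero. Then
\[
\mathrm{span}\{X_i,\widehat Y_{i-1}\}=\mathrm{span}\{X_i,\widehat Y_{i-1,\rm MSE}\},
\]
so the program \eqref{eq:vertex-program-general} and the program \eqref{eq:vertex-program-mse} optimize over the same subspace $S$ of linear combinations of $Z_1,\dots,Z_k$. We only need to compare the two minimizers of $\bbE\,\ell(U,Z_k)$ over $U\in S$. Write $\widehat Y_{\rm MSE}=\widehat Y_{i,\rm MSE}$ for the $L^2$ projection of $Z_k$ onto $S$, so that $Z_k=\widehat Y_{\rm MSE}+E$ with $E\perp S$. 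Because everything is jointly Gaussian, uncorrelatedness gives independence, so $E$ is independent of $S$.

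\textbf{Jensen step.} Take any $U\in S$ and decompose $U=c\,\widehat Y_{\rm MSE}+R$, where $R\in S$ is uncorrelated with $\widehat Y_{\rm MSE}$. The pair $(R,\widehat Y_{\rm MSE},E)$ is jointly Gaussian, and $R$ is uncorrelated with both $\widehat Y_{\rm MSE}$ and $E$ (the latter because $R\in S$). Hence $R$ is independent of $(\widehat Y_{\rm MSE},Z_k)$ and $\bbE[R\mid \widehat Y_{\rm MSE},Z_k]=0$. Conditional Jensen and convexity of $\ell$ in its first argument then give
\[
\bbE\,\ell(U,Z_k)\ge \bbE\,\ell(c\,\widehat Y_{\rm MSE},Z_k).
\]
So the infimum over $S$ equals the one-dimensional infimum over the line $\{c\,\widehat Y_{\rm MSE}\}$.

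\textbf{One-dimensional problem.} Write $\widehat Y_{\rm MSE}=\sum_j a_jZ_j$. By Lemma~\ref{lem:invariants} we have $\sum_j a_j^2=a_k$, and write $\gamma_0=a_k$.
\begin{itemize}
\item If $a_k=0$, then $\ba=0$. This happens in the first pass before vertex $k$, where every available feature is uncorrelated with $Z_k$. In this case the line is $\{0\}$ and the minimizer is $0$.
\item Otherwise, split $\widehat Y_{\rm MSE}=\gamma_0 Z_k+E'$ with $E'=\sum_{j<k}a_jZ_j\sim\cN(0,\gamma_0-\gamma_0^2)$ independent of $Z_k$. Lemma~\ref{lem:invariants} gives $\tfrac12\le\gamma_0<1$ for $i\ge k$. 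This is exactly the form in \eqref{regular2}, so the line minimum is attained at some $\gamma^\star\in(0,2]$.
\end{itemize}
Therefore $\gamma^\star\widehat Y_{\rm MSE}$ minimizes $L$ over $S$. Its coefficient norm is $\gamma^\star\sqrt{a_k}\le 2$, so it lies in the ball of \eqref{regular1}.

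\textbf{Uniqueness (the main obstacle).} It remains to show that the learner's predictor $\widehat Y_i$ equals $\gamma^\star\widehat Y_{\rm MSE}$, rather than some other minimizer. Note that $(\alpha,\beta)$ may be non-unique, but the predictor in $S$ is what matters. Strong convexity holds only on the radius-$2$ ball, so I argue as follows.
\begin{itemize}
\item $L$ is convex on all of $\bbR^k$ because $\ell$ is convex, so the set of minimizers of $L$ on $S$ is convex.
\item If another minimizer $\ba'\neq\gamma^\star\ba$ existed, the whole segment between them would consist of minimizers.
\item An initial piece of that segment lies inside the radius-$2$ ball (or near its boundary point $\gamma^\star\ba$, using the closed ball). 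There $L$ would be constant along a segment, contradicting $\mu$-strong convexity.
\end{itemize}
Hence $\widehat Y_i=\gamma^\star\widehat Y_{i,\rm MSE}$ with $\gamma^\star>0$. This completes the induction.
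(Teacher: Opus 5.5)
Your proof is correct and is essentially the paper's argument: induction on $i$, then the Li--Duan conditional-Jensen reduction to the line through $\widehat Y_{i,\rm MSE}$ (your split $U=c\,\widehat Y_{\rm MSE}+R$ is exactly the paper's $\bbE(U\mid\widehat Y_{i,\rm MSE})=\gamma\,\widehat Y_{i,\rm MSE}$), then \eqref{regular2} for positivity, and finally convexity plus strong convexity on the ball for uniqueness. One small fix: the boundary hedge in your uniqueness step is unnecessary, and on its own it would not work if the segment pointed outward from a boundary point; since $a_k<1$ for $i\ge k$ gives $\gamma^\star\sqrt{a_k}<2$, the minimizer lies strictly inside the ball and your segment argument applies directly.
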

\begin{proof}
The base case is verified trivially as $\widehat Y_0 = \widehat Y_{0, \rm MSE} = 0$. Assume that the inductive hypothesis is true for $i - 1$, i.e., $\widehat Y_{i-1} = \gamma_{i-1} \widehat Y_{i - 1, \rm MSE}$ for some $\gamma_{i-1} > 0$.  
Decompose $Z_k = \widehat Y_{i, \rm MSE} + E_1$, where $E_1$ is a random variable independent of $(X_i, \widehat Y_{i-1})$. It can be seen that $E_1$ is not zero.
The key observation needed now comes from \citet{li-duan1989}: their conditioning argument can be replicated by expanding \eqref{eq:vertex-program-general} as follows,
\begin{align*}
\bbE \ell \bigl(\alpha X_i + \beta \widehat Y_{i - 1}, Z_k \bigr) =
\bbE \bbE \left(\ell \bigl(\alpha X_i + \beta \widehat Y_{i - 1}, Z_k \bigr)  \vert \widehat Y_{i, \rm MSE}, E_1 \right) 
\geq \bbE \left(\ell \bigl( \bbE (\alpha X_i + \beta \widehat Y_{i - 1} \vert \widehat Y_{i, \rm MSE}, E_1 \bigr), Z_k) \right)
\end{align*}
where the inequality is due to Jensen's inequality applied to the convex function $\ell(\cdot, z)$ for any fixed $z$. Further, $E_1 \perp (X_i, \widehat Y_{i-1})$ implies $E_1 \perp \widehat Y_{i, \rm MSE}$, as $\widehat Y_{i, \rm MSE}$ is a linear combination of $X_i$ and $\widehat Y_{i-1, \rm MSE} = (1/ \gamma_{i-1}) \widehat Y_{i-1}$. Hence, 
\begin{align*}
\bbE (\alpha X_i + \beta \widehat Y_{i - 1} \vert\widehat Y_{i, \rm MSE}, E_1) = \bbE (\alpha X_i + \beta \widehat Y_{i - 1} \vert\widehat Y_{i, \rm MSE}).
\end{align*}
Now, $\alpha X_i + \beta \widehat Y_{i - 1}$ and $\widehat Y_{i, \rm MSE}$ are both linear combinations of $\{Z_1, \ldots, Z_k\}$, i.e., they are zero-mean, jointly gaussian random variables. Hence for some $\gamma \in \bbR$, we have
\begin{align*}
\bbE (\alpha X_i + \beta \widehat Y_{i - 1} \vert\widehat Y_{i, \rm MSE}) = \gamma \widehat Y_{i, \rm MSE}.
\end{align*}
We have thus shown that the objective of \eqref{eq:vertex-program-general}, at every feasible predictor is lower bounded by the objective at a predictor of the form $\gamma \widehat Y_{i, \rm MSE}$. Further, this predictor is feasible for \eqref{eq:vertex-program-general}, as $\widehat Y_{i-1} = \gamma_{i-1} \widehat Y_{i-1, \rm MSE}$ with $\gamma_{i-1} \neq 0$. This shows that there always exists a minimizer of \eqref{eq:vertex-program-general} of the form $\gamma \widehat Y_{i, \rm MSE}$, for some $\gamma \in \bbR$.  The optimization problem \eqref{eq:vertex-program-general} now reduces to 
\begin{equation}
\min_{\gamma \in \bbR} \bbE \ell(\gamma \widehat Y_{i, \rm MSE}, Z_k). \notag
\end{equation}
For completing the argument, we need to show that this minimizer $\gamma^\star$ exists, is positive, and unique, implying that it is indeed output by learner $i$. We handle $i \geq k$ and $1 \leq i \leq k - 1$ separately.  

For all $i \geq k$, by Lemma~\ref{lem:invariants}, we have $\widehat Y_{i, \rm MSE} = E_2 +  a^i_k Z_k$, for some $\frac12 \leq a^i_k \leq 1$, and $E_2$ a gaussian independent of $Z_k$ with ${\rm Var}(E_2) = a^i_k (1 - a^i_k)$. Then, \eqref{regular2} implies that $\gamma^\star \widehat Y_{i, \rm MSE}$ is a minimizer of \eqref{eq:vertex-program-general} for $\gamma^\star \in (0, 2]$. Further, the coefficients of $\widehat Y_{i, \rm MSE}$ satisfy $\|\ba^i\|_2^2 = a^i_k < 1$ (Lemma~\ref{lem:invariants}) implying that $\gamma^\star \widehat Y_{i, \rm MSE}$ lies in the region where \eqref{regular1} guarantees that the population loss is strongly convex. Along with the convexity of $\ell$, this ensures that $\gamma^\star \widehat Y_{i, \rm MSE}$ is the unique minimizer. 

For $1 \leq i \leq k - 1$, $\widehat Y_{i, \rm MSE} = 0$, implying that the zero predictor is a minimizer for \eqref{eq:vertex-program-general}. Now $\ell$ is convex, and \eqref{regular1} guarantees strong convexity near $\mathbf{0}$, implying that the zero predictor is the unique minimizer.   
\end{proof}

Let the predictor\footnote{Recall that we refer to the coefficients $\ba$ instead of the predictor $\widehat Y = \sum_j a_j Z_j$ when non-ambiguous from context.} at the end of pass $p$ be $\ba^{\rm MSE}$ when learning on the gaussian instance with $\ell_{\rm MSE}$, and $\ba^\ell$ when learning with a loss $\ell$. From Lemma~\ref{lem:scale}, we have that $\ba^\ell = \gamma \ba^{\rm MSE}$, and obtain
\begin{align}
\|\ba^{\rm \ell} - \be_k\|_2^2 
&= \| \gamma \ba^{\rm MSE} - \be_k\|_2^2 \notag \\
&= \gamma^2 \|\ba^{\rm MSE}\|_2^2 - 2 \gamma a^{\rm MSE}_k + 1 \notag \\
&= \gamma^2 a^{\rm MSE}_k - 2 \gamma a^{\rm MSE}_k + a^{\rm MSE}_k - a^{\rm MSE}_k + 1 \notag \\
&= (1 - \gamma)^2 a^{\rm MSE}_k + (1 - a^{\rm MSE}_k) \label{al-dist}  \\
&\geq \Err^{\rm MSE}_p, \notag 
\end{align}
where we have invoked Lemma~\ref{lem:invariants} on multiple steps, and define $\Err^{\rm MSE}_p$ as the error obtained by the predictor $\ba^{\rm MSE}$. Now using Theorem~\ref{thm:main-mse}, we have shown the following result.
\begin{theorem}[{\rm MSE} when learning with a regular loss] 
\label{thm:mse-misspecified}
For learning with a loss $\ell$ satisfying \eqref{regular1} and \eqref{regular2}, for the gaussian instance in the worst case family where $Z_1, Z_2, \ldots, Z_k \overset{\rm i.i.d.}{\sim} \cN(0, 1)$, for every $1 \leq p \leq k - 1$, we have,
\begin{equation*}
\bbE (\widehat Y_{(p)} - Y)^2 \geq \frac{1}{36 \sqrt{p}},
\end{equation*}
where $\widehat Y_{(p)}$ denotes the predictor learnt by the last vertex on pass $p$.
\end{theorem}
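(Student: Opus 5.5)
The plan is to read Theorem~\ref{thm:mse-misspecified} off the chain of identities \eqref{al-dist} that precedes it, combined with Theorem~\ref{thm:main-mse}. Fix a pass $1 \leq p \leq k-1$ and let $\ba^\ell$ be the coefficient vector of $\widehat Y_{(p)}$, so that $\widehat Y_{(p)} = \sum_j a^\ell_j Z_j$ and $Y = Z_k$. On the gaussian instance the $Z_j$ are orthonormal, which gives
\begin{equation*}
\bbE (\widehat Y_{(p)} - Y)^2 = \|\ba^\ell - \be_k\|_2^2 .
\end{equation*}
The whole proof therefore reduces to lower bounding this Euclidean distance.

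Next I will invoke Lemma~\ref{lem:scale} at the vertex $i = pk$. It gives $\ba^\ell = \gamma \ba^{\rm MSE}$ for some $\gamma > 0$, where $\ba^{\rm MSE} = \ba^{(p)}$ is the MSE coefficient vector at the end of pass $p$. Expanding the square gives
\begin{equation*}
\|\gamma\ba^{\rm MSE}-\be_k\|_2^2 = \gamma^2\|\ba^{\rm MSE}\|_2^2 - 2\gamma a^{\rm MSE}_k + 1 .
\end{equation*}
Substituting the norm identity \eqref{eq:squared-sum}, namely $\|\ba^{\rm MSE}\|_2^2 = a_k^{\rm MSE}$, yields the form in \eqref{al-dist}:
\begin{equation*}
\|\ba^\ell - \be_k\|_2^2 = (1-\gamma)^2 a^{\rm MSE}_k + (1 - a^{\rm MSE}_k).
\end{equation*}
Lemma~\ref{lem:invariants} gives $a^{\rm MSE}_k \geq \tfrac12 > 0$, so the first term is nonnegative. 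By \eqref{eq:ak-one-minus-err}, $1 - a^{\rm MSE}_k = \Err^{\rm MSE}_p$. Together these give $\|\ba^\ell-\be_k\|_2^2 \geq \Err^{\rm MSE}_p$.

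Finally, Theorem~\ref{thm:main-mse} holds for every instance in the worst case family, and the gaussian instance is one of them. It therefore gives $\Err^{\rm MSE}_p \geq \frac{1}{36\sqrt p}$, which is the claimed bound.

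The only step that needs care is the applicability of Lemma~\ref{lem:scale}. It requires \eqref{regular1} and \eqref{regular2}, which ensure that the minimizer at each vertex is unique and is a positive multiple of the MSE predictor, so the learner's output really is $\gamma\ba^{\rm MSE}$. These are exactly the hypotheses of the theorem, so no further work is needed. The positivity of $\gamma$ is not actually used in the distance bound; the decomposition above holds for any real $\gamma$.
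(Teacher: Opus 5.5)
Your proposal is correct and follows the paper's argument essentially line for line. Lemma~\ref{lem:scale} gives $\ba^\ell=\gamma\ba^{\rm MSE}$, and the expansion \eqref{al-dist} with Lemma~\ref{lem:invariants} gives $\bbE(\widehat Y_{(p)}-Y)^2=\|\ba^\ell-\be_k\|_2^2\geq 1-a_k^{\rm MSE}=\Err^{\rm MSE}_p$, after which Theorem~\ref{thm:main-mse} finishes; your remarks that the first identity uses orthonormality of the $Z_j$ and that the sign of $\gamma$ is irrelevant are accurate and just make explicit what the paper leaves implicit.
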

To obtain a bound on $\Err^\ell$ instead, we use \eqref{regular2} on \eqref{al-dist} to get $\|\ba^\ell - \be_k\|^2_2 \leq a^{\rm MSE}_k + (1 - a^{\rm MSE}_k) \leq 1$. Along with $\|\be_k\|_2 \leq 1$, this shows $\|\ba^\ell\|_2 \leq 2$. Thus, when $\ell$ satisfies both \eqref{regular2} and \eqref{regular1}, we have 
\begin{align*}
\Err^\ell_p - \Err^{\ell, \star}
= L(\ba^\ell) - L(\be_k) 
\geq (\ba^\ell - \be_k)^\top \nabla L(\be_k) + \frac{\mu}{2} \|\ba^\ell - \be_k\|^2 \geq \frac{\mu}{72 \sqrt{p}}.
\end{align*}
We have hence proved Theorem~\ref{thm:main-general}, and can apply it to several losses to obtain a $\Omega(\frac{1}{\sqrt{p}})$ error lower bound as corollaries. For the following discussion, fix a scalar $\gamma_0 \in [1/2, 1]$, and let $E \sim \cN(0, \gamma_0(1 - \gamma_0))$ be independent of $Z_k$. Further, denote the vector $\bZ = (Z_1, Z_2, \ldots, Z_k)$. 

\subsection{MSE} \label{subsec:mse}
For $L_{\rm MSE}(\ba) = \bbE (\sum_j a_j Z_j - Z_k)^2$, we have \[L_{\rm MSE}(\ba) = \bbE (\bZ^\top (\ba - \be_k))^2 = \bbE (\ba - \be_k)^\top \bZ \bZ^\top (\ba - \be_k) = \|\ba - \be_k\|_2^2,\] which is differentiable, strongly convex with $\mu = 2$, and minimized at $\be_k$. Thus \eqref{regular1} is verified. Further, \[\bbE (\gamma (E + \gamma_0 Z_k) - Z_k)^2 = \gamma^2 \gamma_0(1 - \gamma_0) + (\gamma \gamma_0 - 1)^2 = \gamma^2 \gamma_0 - 2\gamma \gamma_0 + 1 = (\gamma - 1)^2 \gamma_0 + (1 - \gamma_0),\] which has a unique minimizer at $\gamma = 1$. Thus, \eqref{regular2} is verified, and we recover the $\Omega(\frac{1}{36 \sqrt{p}})$ lower bound in Theorem~\ref{thm:main-mse}, as expected. 

\subsection{Logistic Loss} \label{subsec:logistic}
Recall that $L_{\rm logistic}(\ba) = \bbE \ell_{\rm logistic}(\ba^\top \bZ, Z_k)$, where $\ell_{\rm logistic}(\widehat y, y) = -\sigma(y) \log \sigma(\widehat y) - (1 - \sigma(y)) \log (1 - \sigma(\widehat y))$ with $\sigma(t) = (1 + \exp (-t))^{-1}$. Taking derivatives, we get
\[
\frac{\partial}{\partial \hat y} \ell_\text{logistic}(\hat y, y) = \sigma(\hat y) - \sigma(y),\text{ and}\quad \frac{\partial^2}{\partial \hat y^2} \ell_\text{logistic}(\hat y, y) =  \sigma(\hat y) (1 - \sigma(\hat y)) > 0 \text{ for finite } \widehat y. 
\]
Thus, $L$ is differentiable and convex everywhere, and strongly convex in any bounded domain, e.g., $\{ \ba \colon \| \ba \|_2 \leq 2 \}$. Further, $\nabla L(\ba) = \bbE \left( (\sigma(\ba^\top \bZ) - \sigma(Z_k)) \bZ \right) = 0$ when $\ba = \be_k$, implying that $L$ has a unique minimizer at $\be_k$. We still need to explicitly obtain the strong convexity parameter $\mu$. We have $\left[\nabla^2 L(\ba)\right]_{u, v} = \bbE \sigma'(\ba^\top \bZ) Z_u Z_v$, i.e., 
\[
\nabla^2 L(\ba) = \bbE \sigma'(\ba^\top \bZ) \bZ \bZ^\top.
\]
By rotational symmetry of the gaussian $\bZ$, we can replace $\bZ$ by $R \bZ$ for any rotation matrix $R$. Specifically, take $R$ such that $R^\top \ba = \|\ba\|_2 \be_1$. This gives $\nabla^2 L(\ba) = R \bbE \sigma'(Z_1 \|\ba\|_2) \bZ \bZ^\top R^\top$. As a result, we have
\begin{equation*}
\left[R^\top \nabla^2 L(\ba) R\right]_{u, v} = 
\begin{cases}
0, \quad&u \neq v \\
\bbE \sigma'(Z_1 \|\ba\|_2) Z_1^2, \quad&u = v = 1\\
\bbE \sigma'(Z_1 \|\ba\|_2), \quad&u = v \neq 1
\end{cases}.
\end{equation*}
Thus, $\nabla^2 L(\ba) \succcurlyeq \mu I_k$, where the strong convexity parameter $\mu = \inf_{\|\ba\|_2 \leq 2} \min\{\bbE \sigma'(Z_1 \|\ba\|_2) Z_1^2, \bbE \sigma'(Z_1 \|\ba\|_2)\}$. Lemma~\ref{lem:gaussian-lower-bound} completes this argument and gives $\mu \geq 0.048$. Thus, \eqref{regular1} is verified.

Define $F(\gamma) =  \bbE \ell_{\rm logistic}(\gamma (E + \gamma_0 Z_k), Z_k)$, and note that $F$ is strongly convex in $[0, 2]$. Further Lemma~\ref{lem:F-gradient}, shows that $F'(0) < 0$ and $F'(1) \geq 0$, showing that the unique minimizer $\gamma^\ast$ lies in $(0, 1]$, verifying \eqref{regular2}. Subsequently, applying Theorem~\ref{thm:main-general} proves Theorem~\ref{thm:main-logistic}.

\subsection{Huber and other Losses} \label{sec:other-losses}
For any fixed $\delta > 0$, define the Huber loss \citep{huber1964} as $L(\ba) = \bbE \ell_{{\rm huber}, \delta} (\ba^\top \bZ - Z_k) = \bbE \ell_{{\rm huber}, \delta} ((\ba - \be_k)^\top \bZ)$, where 
\begin{equation*}
\ell_{{\rm huber}, \delta}(t) = \begin{cases}
\frac{1}{2} t^2, &|t| \leq \delta, \\
\delta |t| - \frac{\delta^2}{2}, &|t| > \delta
\end{cases}.
\end{equation*}
As earlier, it is clear that $L$ is differentiable, strongly convex in $\{\ba \colon \|\ba\|_2 \leq 2\}$, and the minimizer is obtained at $\ba = \be_k$. We need to obtain the strong convexity parameter $\mu$. We have 
\[
\nabla L(\ba) = \bbE \left( \mathds{1}\left( |(\ba - \be_k)^\top \bZ| \leq \delta \right)  \bZ \bZ^\top (\ba - \be_k) + \mathds{1}\left( |(\ba - \be_k)^\top \bZ| > \delta \right) \delta {\rm sgn} (\bZ^\top (\ba - \be_k)) \bZ \right).
\]
Again, due to the rotational symmetry of the gaussian $\bZ$, we replace $\bZ$ by $R \bZ$ for a rotation matrix such that $R^\top (\ba - \be_k) = \|\ba - \be_k\|_2 \be_1$. This gives $\nabla^2 L(\ba) = \bbE \mathds{1}\left( \|\ba - \be_k\|_2 |Z_1| \leq \delta \right)  R \bZ \bZ^\top R^\top$. When $\ba = \be_k$, we have $\nabla^2 L(\ba) = I$. Otherwise, the expression simplifies to

\begin{equation*}
\left[ R^\top \nabla^2 L(\ba) R \right]_{u, v} = \begin{cases}
0, &u \neq v\\
\bbE \mathds{1} \left( |Z_1| \leq \frac{\delta}{\|\ba - \be_k\|_2} \right) Z_1^2, &u = v = 1\\
\bbP \left( |Z_1| \leq \frac{\delta}{\|\ba - \be_k\|_2} \right), &u = v \neq 1
\end{cases}.
\end{equation*}
As with $\|\ba - \be_k\|_2 \leq 3$, we get that the strong convexity parameter $\mu \geq \min \{ \bbE \mathds{1} ( |Z_1| \leq \delta/3 ) Z_1^2, \bbP (|Z_1| \leq \delta/3)\} > 0$. This completes verification of \eqref{regular1}. 

Further, defining $H(\gamma) =  \bbE \ell_{{\rm huber}, \delta}(\gamma (E + \gamma_0 Z_k) - Z_k)$, we note that similar to the {\rm MSE}, $\gamma (E + \gamma_0 Z_k) - Z_k \sim \cN(0, (\gamma - 1)^2 \gamma_0  + (1 - \gamma_0))$, hence, $H(\gamma) = \bbE \ell_{{\rm huber}, \delta} ( \sqrt{(\gamma - 1)^2 \gamma_0 + (1 - \gamma_0)} Z_1)$, which decreases as the variance of the random variable inside $\ell_{{\rm huber}, \delta}$ decreases. Hence, the unique minimizer is $\gamma = 1$, verifying \eqref{regular2}.

Very similar analyses can be performed for other losses, like $\ell(\widehat y, y) = \rho(\widehat y - y)$, or $\ell(\widehat y, y) =-y \widehat y + \rho(\widehat y)$ for a suitably normalized, strongly convex $\rho$. Finally, we note that the constant $2$ in the regularity conditions can be generalized to $c \geq 1$ to show the same result, with minor modifications in the proofs.

\section*{Acknowledgements and Disclosure of AI Usage}
We thank Aaron Roth and Michael Kearns for helpful discussions. Additionally, GPT 5.6 was used to discover the reference \citep{li-duan1989}, shorten lengthy algebra in the generating function proof of Lemma~\ref{lem:spike-tail}, discover the elementary proof of Lemma~\ref{lem:gaussian-lower-bound} by bounding the $\cosh$ function, and discover other losses that our argument applies to in Section~\ref{sec:other-losses}. 

\appendix
\section{Omitted Proofs for Section~\ref{sec:regression}}\label{app:omitted-proofs}

\subsection{Proof of Lemma~\ref{lem:vertex-maps}}
\vertexMapsRestated*

\begin{proof}
The objective in \eqref{eq:vertex-program-mse} is the squared Euclidean norm of the residual coefficient vector. We consider each vertex type separately.

At a Type-I learner, the objective is
\begin{align}
&\bbE\left[\left(\alpha Z_1+\beta\sum_{j=1}^k a_jZ_j-Z_k\right)^2\right]\nonumber\\
&\qquad=(\alpha+\beta a_1)^2
+\sum_{j=2}^{k-1}(\beta a_j)^2
+(\beta a_k-1)^2. \label{eq:type-I-objective}
\end{align}
For every fixed $\beta$, we may set $\alpha=-\beta a_1$, which makes the first term zero. The remaining objective is
\begin{equation*}
\beta^2\sum_{j=2}^k a_j^2-2\beta a_k+1.
\end{equation*}
It is minimized at
\begin{equation*}
\beta=\frac{a_k}{\sum_{j=2}^k a_j^2}=s_1(\ba).
\end{equation*}
The new coefficient vector is therefore
\begin{equation*}
\beta(0,a_2,\ldots,a_k)=s_1(\ba)A_1\ba,
\end{equation*}
which proves (i).

At a Type-II learner in position $i$, assigned $X_{k(p-1)+i}=Z_i-Z_{i-1}$, the objective is
\begin{align}
&\bbE\left[\left(\alpha(Z_i-Z_{i-1})
+\beta\sum_{j=1}^k a_jZ_j-Z_k\right)^2\right]\nonumber\\
&\qquad=
\sum_{j\notin\{i-1,i,k\}}(\beta a_j)^2
+(-\alpha+\beta a_{i-1})^2
+(\alpha+\beta a_i)^2
+(\beta a_k-1)^2. \label{eq:type-II-objective}
\end{align}
For fixed $\beta$, the terms containing $\alpha$ are minimized at
\begin{equation*}
\alpha=\frac{\beta}{2}(a_{i-1}-a_i).
\end{equation*}
At this value of $\alpha$, both affected predictor coefficients equal
\begin{equation*}
\frac{\beta}{2}(a_{i-1}+a_i).
\end{equation*}
Thus the new coefficient vector before optimizing over $\beta$ is $\beta A_i\ba$, and the objective becomes
\begin{equation*}
\beta^2\lVert A_i\ba\rVert_2^2-2\beta a_k+1.
\end{equation*}
It is minimized at
\begin{equation*}
\beta=\frac{a_k}{\lVert A_i\ba\rVert_2^2}=s_i(\ba),
\end{equation*}
which proves (ii).

At a Type-III learner, the objective is
\begin{align}
&\bbE\left[\left(\alpha(Z_k-Z_{k-1})
+\beta\sum_{j=1}^k a_jZ_j-Z_k\right)^2\right]\nonumber\\
&\qquad=
\sum_{j=1}^{k-2}(\beta a_j)^2
+(-\alpha+\beta a_{k-1})^2
+(\alpha+\beta a_k-1)^2. \label{eq:type-III-objective}
\end{align}
For fixed $\beta$, the terms containing $\alpha$ are minimized at
\begin{equation*}
\alpha=\frac{1+\beta(a_{k-1}-a_k)}{2}.
\end{equation*}
Substituting this value gives
\begin{equation*}
\beta^2\left(
\sum_{j=1}^{k-2}a_j^2
+\frac12(a_{k-1}+a_k)^2
\right)
-\beta(a_{k-1}+a_k)+\frac12.
\end{equation*}
The minimizing value of $\beta$ is
\begin{equation*}
\beta=
\frac{a_{k-1}+a_k}
{2\sum_{j=1}^{k-2}a_j^2+(a_{k-1}+a_k)^2}
=s_k(\ba).
\end{equation*}
At this value, the first $k-2$ predictor coefficients are $\beta a_j$, while the last two are
\begin{equation*}
\frac{\beta}{2}(a_{k-1}+a_k)-\frac12,
\qquad
\frac{\beta}{2}(a_{k-1}+a_k)+\frac12.
\end{equation*}
Hence the new coefficient vector is
\begin{equation*}
\beta A_k\ba+(0,\ldots,0,-\tfrac12,\tfrac12)
=s_k(\ba)A_k\ba+\bb,
\end{equation*}
which proves (iii).
\end{proof}

\subsection{Proof of Lemma~\ref{lem:invariants}}
\coefficientInvariantsRestated*

\begin{proof}
Fix a learner $i$ whose returned predictor has coefficient vector $\ba=(a_1,\ldots,a_k)$, and write
\begin{equation*}
Y=Z_k,
\qquad
\widehat Y=\widehat Y_i=\sum_{j=1}^k a_jZ_j,
\qquad
\Err=\Err(i)=\bbE[(Y-\widehat Y)^2].
\end{equation*}
By the least-squares program \eqref{eq:vertex-program-mse}, $\widehat Y$ is the orthogonal projection of $Y$ onto the feasible linear space
\begin{equation*}
\mathcal S_i
=\operatorname{span}\{X_i,\widehat Y_{i-1}\}.
\end{equation*}
Thus the residual $Y-\widehat Y$ is orthogonal to every element of $\mathcal S_i$. In particular, because $\widehat Y\in\mathcal S_i$,
\begin{equation}\label{eq:orthogonality}
\bbE[(Y-\widehat Y)\widehat Y]=0.
\end{equation}
As $\bbE[Z_r Z_s]=\mathbf 1\{r=s\}$, we have $\bbE[Y\widehat Y]=a_k$ and $\bbE[\widehat Y^2]=\sum_{j=1}^k a_j^2$. Substituting these two identities into \eqref{eq:orthogonality} gives $a_k=\sum_{j=1}^k a_j^2$, proving \eqref{eq:squared-sum}.

Finally, $\bbE[Y^2]=\bbE[Z_k^2]=1$, so expanding the squared error and using \eqref{eq:squared-sum} yields
\begin{equation*}
\Err
=\bbE[Y^2]-2\bbE[Y\widehat Y]+\bbE[\widehat Y^2]
=1-2a_k+\sum_{j=1}^k a_j^2
=1-a_k.
\end{equation*}
This proves \eqref{eq:ak-one-minus-err}. Note that this error is non-increasing along the path as every vertex may retain its incoming predictor by setting the coefficient of its newly assigned feature to zero. At the first Type-III learner, the error is $1/2$ by \eqref{eq:first-pass-vector}, hence every learner from that point onward has $0\leq\Err\leq1/2$. We will further obtain $\Err \neq 0$ shortly due to the zero-sum condition. Along with \eqref{eq:ak-one-minus-err}, this gives $1/2\leq a_k < 1$.

It remains to prove \eqref{eq:coefficient-zero-sum}, \eqref{eq:last-coordinate-gap}, \eqref{eq:penultimate-coordinate}, and \eqref{eq:coefficient-signs}. For $1\leq p\leq k-1$ and $1\leq i\leq k$, let $\ba^{(p,i)}$ be the vector returned by learner $k(p-1)+i$. Set $\ba^{(0)}=\boldsymbol 0$. Then
\begin{equation*}
\ba^{(p,0)}=\ba^{(p-1)},
\qquad
\ba^{(p,i)}=T_i(\ba^{(p,i-1)}),
\qquad
\ba^{(p,k)}=\ba^{(p)}.
\end{equation*}

The final update in every pass has the form
\begin{equation*}
\ba^{(p,k)}
=s_k(\ba^{(p,k-1)})A_k\ba^{(p,k-1)}+\bb.
\end{equation*}
The last two coordinates of $A_k\ba^{(p,k-1)}$ are equal, while those of $\bb$ are $-1/2$ and $1/2$. Their difference is therefore $1$, proving \eqref{eq:last-coordinate-gap}. Combining this identity with \eqref{eq:ak-one-minus-err} gives \eqref{eq:penultimate-coordinate}.

We first prove \eqref{eq:coefficient-zero-sum}. During the first pass, $T_i(\boldsymbol 0)=\boldsymbol 0$ for $i<k$, and the Type-III vertex outputs
\begin{equation}\label{eq:first-pass-vector}
\ba^{(1)}=(0,\ldots,0,-\tfrac12,\tfrac12),
\qquad
\Err_1=\tfrac12.
\end{equation}
Thus \eqref{eq:coefficient-zero-sum} holds throughout the first pass. At its end, only the last two coordinates can be nonzero. In each later pass, the left-to-right Type-II updates can extend the support by at most one coordinate to the left, while the Type-I and Type-III maps do not extend it further. Hence, before every Type-I update in the regime $p\leq k-1$, the first coordinate is zero, so the Type-I output is a scalar multiple of the same zero-sum vector. Every Type-II averaging operator preserves the coordinate sum, as does scalar multiplication. Finally, $A_k$ preserves the coordinate sum, and the entries of $\bb$ sum to zero. Induction over the passes proves \eqref{eq:coefficient-zero-sum} at every vertex.

We now prove \eqref{eq:coefficient-signs} by induction over the passes. Equation~\eqref{eq:first-pass-vector} is the base case. For the induction step, fix $2\leq p\leq k-1$, and assume that \eqref{eq:coefficient-signs} holds at the end of pass $p-1$. Since $a_1^{(p-1)}=0$, the Type-I map at the start of pass $p$ is the identity: by \eqref{eq:squared-sum}, its scale is $\frac{a_k^{(p-1)}}{\sum_{j=2}^k(a_j^{(p-1)})^2}=1$.

Hence $\ba^{(p,1)}=\ba^{(p-1)}$. Every Type-II scale is positive: its numerator satisfies $a_k>0$ by \eqref{eq:ak-one-minus-err} and error monotonicity, and its denominator is positive. Therefore, scaling does not change support or signs.

Before the Type-II updates, the induction hypothesis gives zero coordinates for $j\leq k-p$ and negative coordinates for $k-p+1\leq j\leq k-1$. The updates through $i=k-p$ average pairs of zeros. The update at $i=k-p+1$ averages the pair $(a_{k-p},a_{k-p+1})$, making both entries negative, and every subsequent update averages two negative entries. Consequently, just before the Type-III update,
\begin{equation*}
a_j^{(p,k-1)}=0\quad(j<k-p),
\qquad
a_j^{(p,k-1)}<0\quad(k-p\leq j\leq k-1).
\end{equation*}
Equation~\eqref{eq:coefficient-zero-sum} and the preceding sign pattern give
\begin{equation*}
a_{k-1}^{(p,k-1)}+a_k^{(p,k-1)}
=-\sum_{j=1}^{k-2}a_j^{(p,k-1)}
>0.
\end{equation*}
This is the numerator of the Type-III scale, whose denominator is positive. Thus the Type-III scale is positive as well.

For $j\leq k-2$, the Type-III map only multiplies $a_j^{(p,k-1)}$ by its positive scale, so the support and signs above persist. It remains to check $j=k-1$. Equation~\eqref{eq:penultimate-coordinate} gives $a_{k-1}^{(p)}=-\Err_p$, where $\Err_p>0$ because the zero-sum condition rules out zero error. Finally, error monotonicity and $\Err_1=1/2$ give
\begin{equation*}
a_k^{(p)}=1-\Err_p\geq\frac12.
\end{equation*}
This completes the induction and proves \eqref{eq:coefficient-signs}.
\end{proof}

\subsection{Proof of Lemma~\ref{lem:cumulative}}
\cumulativeBoundRestated*

\begin{proof}
Recall that using $\bbE[Z_i Z_j]=\mathbf 1\{i=j\}$, we have
\begin{equation*}
\Err_p=\sum_{j<k}(a_j^{(p)})^2+(a_k^{(p)}-1)^2.
\end{equation*}
The last $N+1$ residual coefficients sum to
\begin{equation*}
\sum_{j=k-N}^{k-1}(-a_j^{(p)})+(1-a_k^{(p)})
=1-c_{k-N}^{(p)}.
\end{equation*}
Cauchy--Schwarz on these $N+1$ terms gives
\begin{equation*}
(1-c_{k-N}^{(p)})^2
\leq(N+1)\left(
\sum_{j=k-N}^{k-1}(a_j^{(p)})^2+(1-a_k^{(p)})^2
\right)
\leq(N+1)\Err_p.
\end{equation*}
\end{proof}

\subsection{Proof of Lemma~\ref{lem:pass-recurrence}}
\onePassRecurrenceRestated*

\begin{proof}
Let $\ba^{(p+1,i)}$ be the coefficient vector after the $i$-th learner in pass $p+1$, for $0\leq i\leq k-1$. Thus $\ba^{(p+1,0)}=\ba^{(p)}$, and $\ba^{(p+1,k-1)}$ is the vector returned by learner $pk+k-1$, the last Type-II learner. For $1\leq i\leq k-1$, Lemma~\ref{lem:vertex-maps} gives
\begin{equation*}
\ba^{(p+1,i)}
=T_i\bigl(\ba^{(p+1,i-1)}\bigr)
=s_i\bigl(\ba^{(p+1,i-1)}\bigr)
A_i\ba^{(p+1,i-1)}.
\end{equation*}
As shown in the proof of \eqref{eq:coefficient-signs}, $a_1^{(p)}=0$ and the Type-I scale is $1$. Hence $\ba^{(p+1,1)}=\ba^{(p)}$. Then, by Note~\ref{note:scale-invariance}, we may ignore the positive scales introduced by the Type-II vertices when determining the vector entering the Type-III vertex. More precisely, define
\begin{equation*}
\widetilde{\ba}^{(p+1,k-1)}
:=A_{k-1}\cdots A_2\ba^{(p)}.
\end{equation*}
Then $\ba^{(p+1,k-1)}$ is a positive scalar multiple of $\widetilde{\ba}^{(p+1,k-1)}$. Since $T_k$ is scale invariant, the final update may be computed from $\widetilde{\ba}^{(p+1,k-1)}$. The operators $A_2,\ldots,A_{k-1}$ successively replace the pairs $(a_1,a_2),\ldots,(a_{k-2},a_{k-1})$ by their averages.

Define
\begin{equation*}
m_0=0,\qquad
m_j=\frac{1}{2} \left( m_{j-1}+a_{j+1}^{(p)}\right),
\qquad 1\leq j\leq k-2.
\end{equation*}
At learner $pk+2$, assigned $X_{pk+2}=Z_2-Z_1$, the first two entries are replaced by $m_1=a_2^{(p)}/2$. At the next learner, the second and third entries are replaced by $m_2=(m_1+a_3^{(p)})/2$, and so on. It follows inductively that, after all Type-II learners, the vector entering the final learner (ignoring the scale) is
\begin{equation*}
\widetilde{\ba}^{(p+1,k-1)}
=(m_1,\ldots,m_{k-2},m_{k-2},a_k^{(p)}).
\end{equation*}
The running-average recurrence has the explicit form
\begin{equation}\label{eq:running-average-explicit}
m_j=\sum_{r=2}^{j+1}2^{-(j-r+2)}a_r^{(p)}.
\end{equation}

The Type-III map applies $A_k$, multiplies by a positive scale, which we denote by $\eta_p$, and then adds $\bb=(0,\ldots,0,-\tfrac12,\tfrac12)$. Since $\bw$ omits coordinate $k$, the additive term $\bb$ affects only $w_1$ among the profile coordinates. By Lemma~\ref{lem:invariants}, specifically \eqref{eq:penultimate-coordinate},
\begin{equation*}
w_1^{(p+1)}=-a_{k-1}^{(p+1)}=\Err_{p+1}.
\end{equation*}

For $2\leq i\leq k-1$, set $j=k-i$, substitute $r=k-(i-1+t)$, and use \eqref{eq:edge-profile} with $a_1^{(p)}=0$. Equation~\eqref{eq:running-average-explicit} then gives
\begin{equation*}
w_i^{(p+1)}
= -\eta_p m_{k-i}
= \eta_p \sum_{t=0}^{k-i}2^{-(t+1)}w_{i-1+t}^{(p)}
= \eta_p (\cT\bw^{(p)})_i.
\end{equation*}

We have thus proved \eqref{eq:vector-recurrence}, which, on iteration, gives \eqref{eq:positive-mixture}.
\end{proof}

\subsection{Proof of Theorem~\ref{thm:regression-improved-constants}}
\regressionConstantsRestated*

\begin{proof}
Retaining the exact dependence on $N$ in \eqref{eq:c-tail-bound} (which was derived for $p \geq 4$, but holds for $p = 2, 3$ as well), Lemma~\ref{lem:cumulative} gives, for every integer $1\leq N\leq k-1$,
\begin{equation}\label{eq:constant-family-N}
\Err_p
\geq
\frac{\left(1-\exp\{-N(N-1)/(4(p-1))\}\right)^2}{N+1}.
\end{equation}
Take
\begin{equation*}
N=\left\lceil t\sqrt{p-1}+1\right\rceil
\end{equation*}
in \eqref{eq:constant-family-N}. Then
\begin{equation*}
N(N-1)\geq t^2(p-1),
\qquad
N+1\leq t\sqrt{p-1}+3,
\end{equation*}
which proves \eqref{eq:constant-family-t}.
\end{proof}

\subsection{Proof of Lemma~\ref{lem:spike-tail}}\label{app:partial-sums-proof}
\partialSumsRestated*

\begin{proof}[Proof of Lemma~\ref{lem:spike-tail}]

We will obtain a generating function $V(x, y)$ containing the coefficients $(\cT^p \be_1)_i$, and then obtain the cumulative sum $\frakc(p, N)$ from $V(x, y)$. For the rest of this proof, $x, y$ are free variables, unrelated to their usage elsewhere in the manuscript. 

\paragraph{Functional equation for $V$.}
For $1 \leq i, j \leq k - 1$, define \[v_{i,j} =\left(\cT^{i-1}\be_1\right)_j,\] and note the boundary conditions $v_{1,1}=1$, $v_{1,j}=0$ for $j \geq 2$, and $v_{i,1}=0$ for $i\geq2$. For $i, j \geq k$, we define $v_{i, j} = 0$. Now, for $i,j\geq2$, the definition of $\cT$ gives
\begin{equation*}
v_{i,j}
=\sum_{t\geq0}2^{-(t+1)}v_{i-1,j-1+t}, 
\end{equation*}
which can be rewritten as the recurrence 
\begin{equation}\label{eq:alternate-local-recurrence}
v_{i,j}
=\frac12v_{i-1,j-1}+\frac12v_{i,j+1}.
\end{equation}
Now, we arrange the coefficients $v_{i, j}$ on a formal power series by defining
\begin{equation*}
V(x,y)=\sum_{i,j\geq1}v_{i,j}x^iy^j.
\end{equation*}
Further, let $V_2(x)=\sum_{i\geq1}v_{i,2}x^i$ be the formal power series obtained from the second column of the coefficient matrix. Now using the recurrence \eqref{eq:alternate-local-recurrence}, we get
\begin{equation*}
\sum_{i,j\geq2}v_{i,j}x^iy^j=\frac12 \sum_{i,j\geq2} v_{i-1,j-1} x^iy^j + \frac12 \sum_{i,j\geq2} v_{i,j+1} x^iy^j, 
\end{equation*}
which obtains
\begin{equation*}
V(x,y)-xy
=\frac{xy}{2}V(x,y)
+\frac{1}{2y}\left(V(x,y)-xy-y^2V_2(x)\right)
\end{equation*}
using the boundary conditions.
Equivalently,
\begin{equation}\label{eq:alternate-kernel-equation}
\left(2y-xy^2-1\right)V(x,y)
=2xy^2-xy-y^2V_2(x).
\end{equation}

\paragraph{Solving for $V$.} We will now apply the so-called \emph{kernel method} to solve \eqref{eq:alternate-kernel-equation} for $V(x, y)$. Let
\begin{equation*}
P(x,y)=2y-xy^2-1,
\qquad
Q(x,y)=2xy^2-xy-y^2V_2(x),
\end{equation*}
so that \eqref{eq:alternate-kernel-equation} is $P(x,y)V(x,y)=Q(x,y)$. Letting
\begin{equation*}
U(x)=\frac{1-\sqrt{1-x}}{x}
\qquad\text{and}\qquad
\widetilde U(x)=\frac{1+\sqrt{1-x}}{x},
\end{equation*}
be the roots of $P(x, y)$ viewed as a polynomial in $y$, we have
\begin{equation*}
P(x,y)=-x(y-U(x))(y-\widetilde U(x))
\qquad\text{and}\qquad
Q(x,y)=y\left((2x-V_2(x))y-x\right).
\end{equation*}

Note that $U(x)$ is the only formal power series among the roots, the other root $\widetilde U(x)$ contains the term $1/x$. It is tempting to substitute $y = U(x)$ in \eqref{eq:alternate-kernel-equation} to obtain $V_2(x)$, and eventually $V(x, y)$. However, we must be cautious, as $[x^0] U(x) \neq 0$, and the composition of power series might not be well defined. Fortunately, we observe from the recurrence \eqref{eq:alternate-local-recurrence} that $v_{i,j}=0$ for $j \geq i + 1$. Hence, the coefficient of $x^a$ in $V(x,y)$, denoted $[x^a] V(x, y)$, is a polynomial in $y$ for all $a \geq 0$, and substituting a formal power series for $y$ yields a valid formal power series $V(x, U(x))$. 

Since $P(x,U(x))=0$, \eqref{eq:alternate-kernel-equation} shows that $Q(x,U(x))=0$. We thus have $(2x-V_2(x))U(x)=x$, which in turn shows
\begin{equation*}
Q(x,y)=\frac{x}{U(x)}y(y-U(x)).
\end{equation*}
Consequently, recalling $V(x, y) = Q(x, y) / P(x, y)$, we have
\begin{align*}
V(x,y) =\frac{(x/U(x))y(y-U(x))}
{-x(y-U(x))(y-\widetilde U(x))} =\frac{y}{U(x)\widetilde U(x)-yU(x)} =\frac{xy}{1-xyU(x)} =\frac{xy}{1-y(1-\sqrt{1-x})},
\end{align*}
where the last equality uses $U(x)\widetilde U(x)=1/x$.

\paragraph{Extracting the partial sum.} To obtain $\frakc(p,N)$, we see that \begin{equation*}
[x^{p+1}y^N](1 - y)^{-1} V(x,y)
=\sum_{j=1}^{N}v_{p+1,j}
=\frakc(p,0)-\frakc(p,N), 
\end{equation*}
from which the required partial sum is obtained as
\begin{align*}
\frakc(p,N)
&=[x^{p+1}y^N] (1 - y)^{-1}(V(x,1)-V(x,y)).
\end{align*}

Letting $u(x) = 1 - \sqrt{1 - x}$, we have 
\begin{equation*}
\frac{V(x,1)-V(x,y)}{1-y}
=\frac{x}{(1-u(x))(1-y u(x))} = \frac{x}{1 - u(x)} \sum_{r \geq 0} y^r u(x)^r, 
\end{equation*}
which implies
\begin{equation}
\frakc(p,N)
=[x^p]\frac{u(x)^N}{1-u(x)}. \label{eq:lif_apply}
\end{equation}

We would ideally like to apply the Lagrange inversion formula to solve \eqref{eq:lif_apply} using the implicit form 
\begin{equation}
u(x) = x \cdot (2 - u(x))^{-1}. \label{eq:uimplicit}
\end{equation}
However, direct application leads to a cumbersome calculation involving sums of binomial coefficients. It is easier to apply a variant instead (see \citep[Appendix A.6, A.13]{flajolet-sedgewick}). To do this, we differentiate \eqref{eq:uimplicit} to get $u'(x) = \frac{1}{2(1 - u(x))}$, which then gives
\begin{equation*}
\frac{d}{dx}u(x)^{N+1}
=(N+1)u(x)^Nu'(x)
=\frac{N+1}{2}\frac{u(x)^N}{1-u(x)}.
\end{equation*}
Let $H$ be any formal power series $H(x)$. Then we have $[x^p]H'(x)=(p+1)[x^{p+1}]H(x)$. Therefore, we have
\begin{align*}
[x^p]\frac{u(x)^N}{1-u(x)} =\frac{2}{N+1}[x^p]\frac{d}{dx}u(x)^{N+1} =\frac{2(p+1)}{N+1}[x^{p+1}]u(x)^{N+1}.
\end{align*}
Now recall the Lagrange inversion formula which gives $[x^a] H(u) = \frac{1}{a} [u^{a-1}](H'(u) \phi(u)^a)$ for a formal power series $u(x)$ satisfying the functional form $u = x \phi(u)$, $[x^0] u = 0$, any formal power series $H$, and $\phi(0) \neq 0$. Applying this with $\phi(u) = (2 - u)^{-1}$, and $H(u) = u^{N + 1}$, we have 
\begin{equation*}
[x^{p+1}]u(x)^{N+1}
=\frac{N+1}{p+1}[u^{p-N}](2-u)^{-(p+1)}.
\end{equation*}
Hence, for $0\leq N\leq p$, we use the generalized binomial coefficients to get
\begin{align*}
[x^p]\frac{u(x)^N}{1-u(x)} = 2[u^{p-N}](2-u)^{-(p+1)} = 2\binom{-(p+1)}{p-N}(-1)^{p-N}2^{-(p+1)-(p-N)} = 2^{N-2p}\binom{2p-N}{p}.
\end{align*}
Further, $[x^0]u(x) = 0$, implies that the coefficient $[x^p]\frac{u(x)^N}{1 - u(x)} = 0$ for all $p < N$.  Finally,
\begin{equation*}
\frac{\frakc(p,N)}{\frakc(p,0)}
=2^N\frac{\binom{2p-N}{p}}{\binom{2p}{p}}
=\prod_{j=0}^{N-1}
\left(1-\frac{j}{2p-j}\right) 
\leq
\exp \left(-\sum_{j=0}^{N-1}\frac{j}{2p}\right)
=\exp \left(-\frac{N(N-1)}{4p}\right),
\end{equation*}
for all $1 \leq N \leq p$, where we have used $1-x\leq e^{-x}$ and $2p-j\leq2p$ for the inequality. 
\end{proof}

\section{Omitted Proofs for Section~\ref{sec:general-convex}}

\subsection{Proof of Lemma~\ref{lem:gaussian-lower-bound}}

\begin{lemma}\label{lem:gaussian-lower-bound}
Let $G\sim\cN(0,1)$. For every $0\leq r\leq2$,
\begin{equation*}
\min\left\{
\bbE[\sigma'(rG)],
\bbE[G^2\sigma'(rG)]
\right\}
\geq
\frac{1}{12\sqrt3}
>0.048.
\end{equation*}
\end{lemma}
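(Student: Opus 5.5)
The plan is to write $\sigma'(t)=\sigma(t)(1-\sigma(t))=\frac{1}{4\cosh^2(t/2)}$ and lower bound $1/\cosh^2$ by an elementary function whose Gaussian expectation is explicit. The acknowledgement hints that the intended route bounds the $\cosh$ function. I will use $\cosh(s)\le e^{s^2/2}$, which follows by comparing Taylor series termwise, since $(2n)!\ge 2^n n!$. With $s=rG/2$ this gives
\[
\sigma'(rG)\ \ge\ \tfrac14 e^{-r^2G^2/4}.
\]
Because this bound is decreasing in $r$, the worst case is $r=2$, where $\sigma'(2G)\ge \frac14 e^{-G^2}$.

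Both Gaussian integrals can then be computed exactly. For $a>0$,
\[
\bbE\, e^{-aG^2}=(1+2a)^{-1/2}
\qquad\text{and}\qquad
\bbE\, G^2 e^{-aG^2}=(1+2a)^{-3/2}.
\]
Setting $a=1$ yields
\[
\bbE\,\sigma'(rG)\ \ge\ \tfrac{1}{4\sqrt3}
\qquad\text{and}\qquad
\bbE\, G^2\sigma'(rG)\ \ge\ \tfrac14\cdot 3^{-3/2}=\tfrac{1}{12\sqrt3}.
\]
The minimum of the two is $\frac{1}{12\sqrt3}\approx 0.0481>0.048$, which is exactly the stated constant. This also confirms that the chosen inequality is the right one.

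The main step to get right is the cosh bound together with the monotonicity in $r$. Both quantities are monotone in $r$ through the pointwise bound $e^{-r^2G^2/4}\ge e^{-G^2}$ for $r\le2$, so no monotonicity of $\bbE\,\sigma'(rG)$ itself is needed. The rest is the standard Gaussian integral $\int e^{-(1+2a)g^2/2}\,dg/\sqrt{2\pi}$ and its second-moment version, obtained by rescaling the variance to $1/(1+2a)$.
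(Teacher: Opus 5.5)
Your proposal is correct and follows essentially the paper's proof: the pointwise bound $\sigma'(t)\ge\frac14 e^{-t^2/4}$ obtained from $\cosh(s)\le e^{s^2/2}$, followed by the explicit Gaussian integrals $\bbE\, e^{-aG^2}=(1+2a)^{-1/2}$ and $\bbE\, G^2e^{-aG^2}=(1+2a)^{-3/2}$. The differences are cosmetic. You prove the $\cosh$ bound by termwise Taylor comparison rather than by Hoeffding's lemma for a Rademacher variable, and you reduce to $r=2$ pointwise before integrating rather than bounding $(1+r^2/2)^{-3/2}\ge 3^{-3/2}$ afterward.
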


\begin{proof}
Let $S\sim\operatorname{Unif}\{-1,1\}$ be a Rademacher random variable. Since $\bbE[S]=0$ and $S\in[-1,1]$, Hoeffding's lemma gives, for every $u\in\mathbb R$,
\begin{equation}
\frac{e^u+e^{-u}}{2}
=\bbE[e^{uS}]
\leq\exp\left(\frac{u^2(1-(-1))^2}{8}\right) \label{eq:coshbd}
=e^{u^2/2}.
\end{equation}
From the definition $\sigma(t)=(1+e^{-t})^{-1}$, applying \eqref{eq:coshbd} with $u=t/2$ gives
\begin{equation*}
\sigma'(t)
=\frac{e^{-t}}{(1+e^{-t})^2}
=\frac{1}{(e^{t/2}+e^{-t/2})^2}
=\frac14\left(\frac{e^{t/2}+e^{-t/2}}{2}\right)^{-2}
\geq\frac14\left(e^{t^2/8}\right)^{-2}
=\frac14e^{-t^2/4}.
\end{equation*}
Consequently, for every $r\geq0$,
\begin{equation}\label{eq:logistic-pointwise-gaussian-bound}
\sigma'(rG)\geq\frac14e^{-r^2G^2/4}.
\end{equation}
For $a\geq0$, first write each expectation against the standard Gaussian density:
\begin{align*}
\bbE[e^{-aG^2}] &= \frac{1}{\sqrt{2\pi}}\int_{\mathbb R}e^{-(1+2a)x^2/2}\,dx,\\
\bbE[G^2e^{-aG^2}] &=\frac{1}{\sqrt{2\pi}}\int_{\mathbb R}x^2e^{-(1+2a)x^2/2}\,dx.
\end{align*}
Under the change of variables $z=\sqrt{1+2a}\,x$, we have $dx=(1+2a)^{-1/2}\,dz$ and $x^2=(1+2a)^{-1}z^2$. Hence
\begin{align*}
\bbE[e^{-aG^2}]
&=(1+2a)^{-1/2}\frac{1}{\sqrt{2\pi}}\int_{\mathbb R}e^{-z^2/2}\,dz =(1+2a)^{-1/2},\\
\bbE[G^2e^{-aG^2}]
&=(1+2a)^{-3/2}\frac{1}{\sqrt{2\pi}}\int_{\mathbb R}z^2e^{-z^2/2}\,dz =(1+2a)^{-3/2},
\end{align*}
where the last equalities use that the standard Gaussian density integrates to one and $\bbE[G^2]=1$, respectively.
Taking $a=r^2/4$ in these identities and using \eqref{eq:logistic-pointwise-gaussian-bound} gives
\begin{align*}
\bbE[\sigma'(rG)]
&\geq\frac14\left(1+\frac{r^2}{2}\right)^{-1/2},\\
\bbE[G^2\sigma'(rG)]
&\geq\frac14\left(1+\frac{r^2}{2}\right)^{-3/2}.
\end{align*}
Since $1+r^2/2\geq1$ and $r\leq2$, both expectations are at least
\begin{equation*}
\frac14\left(1+\frac{r^2}{2}\right)^{-3/2}
\geq
\frac14(3)^{-3/2}
=\frac{1}{12\sqrt3}.
\end{equation*}
\end{proof}

\subsection{Proof of Lemma~\ref{lem:F-gradient}}

\begin{lemma}\label{lem:F-gradient}
Fix $\gamma_0\in[1/2,1]$. Let $Z\sim\cN(0,1)$ and $E\sim\cN(0,\gamma_0(1-\gamma_0))$ be independent, and define
\[
F(\gamma)
=\bbE\ell_{\rm logistic}\bigl(\gamma(E+\gamma_0Z),Z\bigr).
\]
Then $F'(0)<0$ and $F'(1)\geq0$.
\end{lemma}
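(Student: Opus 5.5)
The plan is to compute $F'$ explicitly and then evaluate it at $\gamma=0$ and $\gamma=1$ using Gaussian integration by parts (Stein's identity). Set $W=E+\gamma_0 Z$. Then $W$ is a centered Gaussian with
\[
\operatorname{Var}(W)=\gamma_0(1-\gamma_0)+\gamma_0^2=\gamma_0,
\qquad
\bbE[WZ]=\gamma_0 .
\]
From Section~\ref{subsec:logistic}, $\frac{\partial}{\partial\hat y}\ell_{\rm logistic}(\hat y,y)=\sigma(\hat y)-\sigma(y)$. This derivative is bounded, and $W$ has all moments, so we may differentiate under the expectation. This gives
\[
F'(\gamma)=\bbE\bigl[(\sigma(\gamma W)-\sigma(Z))\,W\bigr].
\]

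\textbf{Step 1: $F'(0)<0$.} Since $\sigma(0)=\tfrac12$ and $\bbE W=0$, we get $F'(0)=-\bbE[\sigma(Z)W]$. Write $W=\gamma_0 Z+E$. Because $E$ is independent of $Z$ and has mean zero, $\bbE[\sigma(Z)E]=0$, so $F'(0)=-\gamma_0\,\bbE[\sigma(Z)Z]$. Stein's identity $\bbE[g(Z)Z]=\bbE[g'(Z)]$ then yields
\[
F'(0)=-\gamma_0\,\bbE[\sigma'(Z)].
\]
This is strictly negative, because $\gamma_0\ge\tfrac12$ and $\sigma'>0$.

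\textbf{Step 2: $F'(1)\ge 0$.} We have $F'(1)=\bbE[\sigma(W)W]-\bbE[\sigma(Z)W]$. The second term equals $\gamma_0\,\bbE[\sigma'(Z)]$, exactly as in Step 1. For the first term, apply Stein's identity for a $\cN(0,\gamma_0)$ variable, $\bbE[g(W)W]=\gamma_0\,\bbE[g'(W)]$. Together these give
\[
F'(1)=\gamma_0\Bigl(\bbE[\sigma'(\sqrt{\gamma_0}\,G)]-\bbE[\sigma'(G)]\Bigr),
\qquad G\sim\cN(0,1).
\]

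\textbf{Main obstacle: a monotonicity claim.} It remains to show that $r\mapsto\bbE[\sigma'(rG)]$ is nonincreasing on $r\ge0$; then $\sqrt{\gamma_0}\le1$ gives $F'(1)\ge0$. For this I would use the representation $\sigma'(t)=\bigl(e^{t/2}+e^{-t/2}\bigr)^{-2}$ from the proof of Lemma~\ref{lem:gaussian-lower-bound}. It shows that $\sigma'$ is even and decreasing in $|t|$. Hence $\sigma'(rG)$ is pointwise nonincreasing in $r$ for every realization of $G$, and taking expectations preserves the ordering. The only other care needed is justifying the interchange of derivative and expectation, which the boundedness noted above already handles.
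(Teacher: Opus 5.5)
Your proposal is correct and follows essentially the same route as the paper: you use Stein's identity to reduce $F'(0)$ and $F'(1)$ to $-\gamma_0\bbE[\sigma'(G)]$ and $\gamma_0\bigl(\bbE[\sigma'(\sqrt{\gamma_0}G)]-\bbE[\sigma'(G)]\bigr)$, then conclude from $\sigma'$ being even and decreasing in $|t|$. The only differences are cosmetic: you obtain $F'(0)$ via $\sigma(0)=\tfrac12$ and the independence of $E$ rather than from the general formula for $F'(\gamma)$, and you explicitly justify differentiating under the expectation, which the paper leaves implicit.
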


\begin{proof}
Let $X=E+\gamma_0Z$. Since $E$ and $Z$ are independent, mean zero gaussian random variables, we have that $X$ is a mean zero gaussian with variance
\[
{\rm Var}(X)
={\rm Var}(E)+\gamma_0^2{\rm Var}(Z)
=\gamma_0(1-\gamma_0)+\gamma_0^2
=\gamma_0.
\]
Further, ${\rm Cov}(X,Z)=\gamma_0$, and 
\[
F'(\gamma)
=\bbE\left[
\bigl(\sigma(\gamma X)-\sigma(Z)\bigr)X
\right]
=\gamma_0\left\{
\gamma\bbE[\sigma'(\gamma X)]
-\bbE[\sigma'(Z)]
\right\},
\]
where the last equality follows by applying Stein's identity in the form $\bbE [f(E_1) E_2] = {\rm Cov}(E_1, E_2) \bbE f'(E_1)$ for centered jointly gaussian random variables $(E_1, E_2)$. Let $G \sim \cN(0, 1)$. At $\gamma=0$,
\[
F'(0)
=-\gamma_0\bbE[\sigma'(G)]
<0,
\]
because $\gamma_0>0$ and $\sigma'(G)>0$ almost surely. At $\gamma=1$, we get
\[
F'(1)
=\gamma_0\left\{
\bbE[\sigma'(\sqrt{\gamma_0}G)]
-\bbE[\sigma'(G)]
\right\}.
\]
Fix any $g\in\bbR$. Since $\gamma_0\leq1$, we have $|\sqrt{\gamma_0}g|\leq|g|$. Moreover, $\sigma'(t) = (e^{t/2}+e^{-t/2})^{-2}$, i.e., $\sigma'(t) = \sigma'(|t|)$ and $\sigma'$ decreases on $[0,\infty)$. Hence
\[
\sigma'(\sqrt{\gamma_0}g)
=\sigma'(|\sqrt{\gamma_0}g|)
\geq\sigma'(|g|)
=\sigma'(g).
\]
Taking expectations and using $\gamma_0>0$ shows that $F'(1)\geq0$.
\end{proof}

\bibliographystyle{plainnat}
\bibliography{refs}

\begin{thebibliography}{5}
\providecommand{\natexlab}[1]{#1}
\providecommand{\url}[1]{\texttt{#1}}
\expandafter\ifx\csname urlstyle\endcsname\relax
  \providecommand{\doi}[1]{doi: #1}\else
  \providecommand{\doi}{doi: \begingroup \urlstyle{rm}\Url}\fi

\bibitem[Bateni et~al.(2026)Bateni, Hadizadeh, Hajiaghayi, JafariRaviz, and
  Taherijam]{bateninetworked}
Mohammadhossein Bateni, Zahra Hadizadeh, MohammadTaghi Hajiaghayi, Mahdi
  JafariRaviz, and Shayan Taherijam.
\newblock Networked information aggregation for binary classification.
\newblock In \emph{Forty-third International Conference on Machine Learning},
  2026.

\bibitem[Flajolet and Sedgewick(2009)]{flajolet-sedgewick}
P.~Flajolet and R.~Sedgewick.
\newblock \emph{Analytic Combinatorics}.
\newblock Cambridge University Press, 2009.
\newblock URL \url{https://doi.org/10.1017/CBO9780511801655}.

\bibitem[Huber(1964)]{huber1964}
P.~J. Huber.
\newblock Robust estimation of a location parameter.
\newblock \emph{The Annals of Mathematical Statistics}, 35\penalty0
  (1):\penalty0 73--101, 1964.
\newblock URL \url{https://doi.org/10.1214/aoms/1177703732}.

\bibitem[Kearns et~al.(2026)Kearns, Roth, and Ryu]{krr}
M.~Kearns, A.~Roth, and E.~Ryu.
\newblock Networked information aggregation via machine learning.
\newblock In \emph{Proceedings of the Annual {ACM--SIAM} Symposium on Discrete
  Algorithms ({SODA})}, 2026.

\bibitem[Li and Duan(1989)]{li-duan1989}
K.-C. Li and N.~Duan.
\newblock Regression analysis under link violation.
\newblock \emph{The Annals of Statistics}, 17\penalty0 (3):\penalty0
  1009--1052, 1989.
\newblock URL \url{https://doi.org/10.1214/aos/1176347254}.

\end{thebibliography}

\end{document}